\definecolor{light-gray}{gray}{0.95}
\definecolor{mydarkblue}{rgb}{0,0.08,0.45}
\definecolor{cardinalred}{rgb}{0.549,0.082,0.082}
\definecolor{digitalred}{rgb}{0.694,0.016,0.055}
\definecolor{digitalblue}{rgb}{0.0000, 0.4235, 0.7216}
\definecolor{paloalto}{rgb}{0.0902, 0.3686, 0.3294}
\definecolor{paloverde}{rgb}{0.1529, 0.6000, 0.5373}
\definecolor{paloverdelight}{rgb}{0.3490, 0.7020, 0.6627}
\definecolor{paloverdejy}{HTML}{1EAE94}
\definecolor{poppy}{rgb}{0.9137, 0.5137, 0.0000}
\definecolor{spirited}{rgb}{0.8784, 0.3098, 0.2235}
\definecolor{spiriteddark}{rgb}{0.7804, 0.2745, 0.1961}
\definecolor{RedOrange}{rgb}{0.9, 0.3, 0.0}
\newcommand{\calU}{\mathcal{U}}
\newcommand{\calN}{\mathcal{N}}
\newcommand{\calL}{\mathcal{L}}
\newcommand{\calH}{\mathcal{H}}
\newcommand{\calD}{\mathcal{D}}
\newcommand{\E}{\mathbb{E}}
\newcommand{\R}{\mathbb{R}}
\newcommand{\iid}{\overset{\textup{iid}}{\sim}}
\newcommand{\polinf}{\Psi_{\pi\text{-}\mathrm{inf}}}
\newcommand{\actinf}{\Psi_{a\text{-}\mathrm{inf}}}
\newcommand{\polinfest}{\widehat{\Psi}_{\pi\text{-}\mathrm{inf}}}
\DeclareMathOperator*{\argtop}{arg\text{ }top}
\newtheorem{definition}{Definition}
\newtheorem{proposition}{Proposition}
\newtheorem{task}{Task}
\newif\ifarxiv
\renewcommand{\section}{%
  \@startsection{section}{1}{\z@}%
                {-2.0ex plus -0.5ex minus -0.2ex}%
                {1.0ex plus 0.2ex}
                {\large\bf\raggedright}%
}
\renewcommand{\subsection}{%
  \@startsection{subsection}{2}{\z@}%
                {-1.8ex plus -0.5ex minus -0.2ex}%
                {0.5ex plus 0.2ex}
                {\normalsize\bf\raggedright}%
}
\renewcommand{\subsubsection}{%
  \@startsection{subsubsection}{3}{\z@}%
                {-1.5ex plus -0.5ex minus -0.2ex}%
                {0.3ex plus 0.2ex}
                {\normalsize\bf\raggedright}%
}
\newcommand\footnoteref[1]{\protected@xdef\@thefnmark{\ref{#1}}\@footnotemark}
\newcommand{\basemethod}{\textsc{Cupid}}
\newcommand{\qualitymethod}{\textsc{Cupid-Quality}}
\title{CUPID: Curating Data your Robot Loves\\with Influence Functions\vspace{-6pt}}
\newcommand{\authorhref}[3][black]{\href{#2}{\textcolor{#1}{#3}}}
\author{
    \bfseries
    \authorhref{http://agiachris.github.io/}{Christopher Agia}\textsuperscript{1},\,\,
    \authorhref{https://rohansinha.nl/}{Rohan Sinha}\textsuperscript{1},\,\,
    \authorhref{https://yjy0625.github.io/}{Jingyun Yang}\textsuperscript{1},\\ \bfseries
    \authorhref{https://contactrika.github.io/}{Rika Antonova}\textsuperscript{2},\,\,
    \authorhref{https://profiles.stanford.edu/marco-pavone}{Marco Pavone}\textsuperscript{1,3},\,\,
    \authorhref{https://harukins.github.io/}{Haruki Nishimura}\textsuperscript{4},\,\,
    \authorhref{https://mashaitkina.weebly.com/}{Masha Itkina}\textsuperscript{4},\,\,
    \authorhref{https://web.stanford.edu/~bohg/}{Jeannette Bohg}\textsuperscript{1}\\\vspace{-8pt}\\
    \textsuperscript{1}\href{https://www.stanford.edu/}{Stanford University}, \textsuperscript{2}\href{https://www.cam.ac.uk/}{University of Cambridge},
    \textsuperscript{3}\href{https://www.nvidia.com/en-us/research/}{NVIDIA Research},
    \textsuperscript{4}\href{https://www.tri.global/}{Toyota Research Institute}
    \vspace{-18pt}
}
\begin{document}

\makeatletter
\def\blfootnote{\xdef\@thefnmark{}\@footnotetext}
\makeatother
\maketitle

\begin{figure}[h]
    \vspace{-6pt}
    \includegraphics[width=0.97\linewidth]{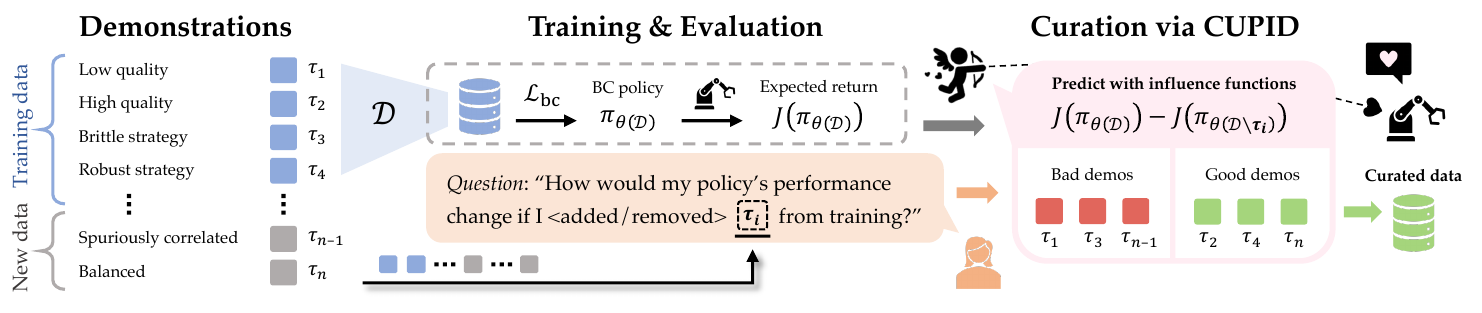}
    \vspace{-7pt}
    \caption{\small
        We present \textbf{CUPID}, a robot data curation method that leverages influence functions to predictively answer counterfactual questions about the effect of each demonstration on downstream policy performance.
    }
    \label{fig:teaser}
\end{figure}

\vspace{-8pt}
\begin{abstract}
    In robot imitation learning, policy performance is tightly coupled with the quality and composition of the demonstration data. 
Yet, developing a precise understanding of how individual demonstrations contribute to downstream outcomes---such as closed-loop task success or failure---remains a persistent challenge. 
We propose \basemethod{}, a robot data curation method based on a novel influence function-theoretic formulation for imitation learning policies.
Given a set of evaluation rollouts, \basemethod{} estimates the influence of each training demonstration on the policy's expected return. 
This enables ranking and selection of demonstrations according to their impact on the policy's closed-loop performance.
We use \basemethod{} to curate data by 1) filtering out training demonstrations that harm policy performance and 2) subselecting newly collected trajectories that will most improve the policy. 
Extensive simulated and hardware experiments show that our approach consistently identifies which data drives test-time performance. 
For example, training with less than 33\% of curated data can yield state-of-the-art diffusion policies on the simulated RoboMimic benchmark, with similar gains observed in hardware.
Furthermore, hardware experiments show that our method can identify robust strategies under distribution shift, isolate spurious correlations, and even enhance the post-training of generalist robot policies.
Videos and code are made available at: \href{https://cupid-curation.github.io}{https://cupid-curation.github.io}.
\blfootnote{Correspondence to: \href{mailto:cagia@cs.stanford.edu}{cagia@cs.stanford.edu}}
\vspace{-5pt}
\end{abstract}

\keywords{Imitation Learning, Data Curation, Influence Functions}

\section{Introduction}\label{sec:introduction}
While some of the largest breakthroughs in deep learning have emerged from architectural innovations, data often remains an underrecognized yet critical driver of a model's overall performance.
In particular, the success of scaling vision and language models has been followed by a rising interest in data attribution~\cite{grosse2023studying, park2023trak, engstrom2024dsdm}---methods that causally link model behavior to training data---and in automatic data curation algorithms~\cite{lee2021deduplicating, tirumala2023d4, albalak2024survey}, grounded in the idea that not all data points contribute equally, or even positively, to a model's performance. As parts of the robotics community scale imitation learning and robotics datasets become increasingly diverse~\cite{o2024open, khazatsky2024droid}, developing a deeper understanding of (i) how demonstration data shapes policy behavior and (ii) how we can extract maximum utility from training datasets will be imperative to advancing policy performance toward reliable, open-world deployment.

Curating data for robot imitation learning has been the focus of several recent works~\cite{kuhar2023learning, hejna2025robotdatacurationmutual, chen2025curating}.
A common approach retains demonstrations deemed most valuable under a heuristic, \textit{task-agnostic quality} metric, resulting in a smaller dataset curated offline~\cite{hejna2025robotdatacurationmutual}. 
This approach typically rests on the implicit assumption that the designed quality metric aligns well with the policy’s downstream performance---an assumption that may not hold uniformly across diverse robotics tasks.
While recent efforts attempt to learn \textit{performance-correlated} heuristics using online policy experience~\cite{chen2025curating}, they do not establish strong causal links between training data and policy behavior. 
As a result, these methods risk misattributing the root cause of policy success or failure with respect to the training data~\cite{de2019causal}.

In this work, we formally define data curation in imitation learning as the problem of identifying which expert demonstrations maximally contribute to the policy’s expected return. 
We then introduce \basemethod{} (\textsc{Cu}rating \textsc{P}erformance-\textsc{I}nfluencing \textsc{D}emonstrations), a data curation method that directly targets this objective by leveraging influence functions~\cite{koh2017understanding, koh2019accuracy}---a technique popularized in the data attribution literature~\cite{hammoudeh2024training}---to identify which demonstrations influenced a policy’s predictions during closed-loop execution.
We show that a demonstration’s influence on expected return decomposes into a tractable sum over its state-action transitions and can be efficiently approximated using a \texttt{REINFORCE}-style estimator~\cite{sutton1999policy} given a set of policy rollouts. 
Ranking demonstrations by their estimated performance impact facilitates curation in two settings: (a) filtering existing demonstrations from training sets and (b) selecting high-impact demonstrations from newly collected or pre-collected data---whereas prior work focuses solely on filtering~\cite{hejna2025robotdatacurationmutual, chen2025curating}. 
Finally, while our approach offers a general and effective standalone signal for curating demonstration data, we investigate its combined use with task-agnostic quality metrics (also derived from influence scores), identifying conditions under which the integration of performance- and quality-based metrics strengthens or weakens overall curation performance.

Our contributions are three-fold: (1) We formulate robot data curation as the problem of valuating demonstrations in accordance with their downstream impact on policy performance; (2) We propose \basemethod{}, a novel approach for curating imitation learning datasets based on influence functions, causally linking demonstrations to the policy’s expected return; (3) We characterize the conditions under which the integration of task-agnostic quality metrics strengthens performance-based data curation, providing practical insights into when such integration is beneficial.
Extensive simulation and hardware experiments show that curation with \basemethod{} significantly improves policy performance in mixed-quality regimes, even when using only a fraction of the training data.
Moreover, it identifies robust strategies under test-time distribution shifts and can disentangle spurious correlations in training data that hinder generalization---all by observing policy outcomes alone, without requiring additional supervision.

\section{Related Work}\label{sec:related-work}
\textbf{Data Curation in Robotics.} 
Assembling larger and more diverse datasets has been central to scaling efforts in robot imitation learning~\cite{o2024open, khazatsky2024droid, rt12022arxiv, rt22023arxiv, octo_2023, pmlr-v270-kim25c, black2410pi0}, yet how to extract greater utility from these datasets remains an open question.
Several works have explored data augmentation~\cite{mandlekar2023mimicgen, yu2023scaling, mandi2022cacti, smith2024steer, pmlr-v270-zawalski25a} and mixture optimization~\cite{pmlr-v270-hejna25a}.
Only recently has attention shifted to valuating individual demonstrations for data curation~\cite{kuhar2023learning, hejna2025robotdatacurationmutual, chen2025curating}.
\citet{hejna2025robotdatacurationmutual} estimate demonstration quality offline via mutual information---without considering policy performance---while \citet{chen2025curating} train classifiers to distinguish successful and failed rollouts across policy checkpoints. In contrast, we directly measure the causal influence of each demonstration on the policy's expected return, providing a signal that (a) does not require observing both successes and failures, (b) uses only a single policy checkpoint, (c) is robust to spurious correlations in the policy's rollout distribution, and (d) naturally extends to selecting new data, whereas~\cite{hejna2025robotdatacurationmutual, chen2025curating} only filter existing data.
Concurrent to our work is DataMIL~\cite{dass2025datamil}, which uses datamodels to select from large multi-task datasets with an offline metric, whereas we focus on single-task curation with an influence measure that directly reflects closed-loop returns from online policy rollouts.

\textbf{Data Attribution outside Robotics.}
Data attribution methods model the relationship between training data and learned behavior, with applications in model interpretability~\cite{park2023trak, shah2023modeldiff}, data valuation~\cite{ghorbani2019data, choe2024your}, machine unlearning~\cite{georgiev2024attribute}, and more~\cite{madry2024icml}.
Recent work has focused on improving the accuracy of data attribution methods~\cite{basu2021influence, bae2022if, ilyas2025magic}, such as influence functions~\cite{koh2017understanding, koh2019accuracy}, and extending them to increasingly complex generative architectures~\cite{grosse2023studying, zheng2023intriguing, georgiev2023journey}.
A related line of research explores improving language model pre-training~\cite{engstrom2024dsdm} and fine-tuning~\cite{xia2024less, liu2024tsds, engstrom2025optimizing} through data selection.
However, these settings typically assume aligned training and evaluation objectives (i.e., prediction loss) and access to test-time labels.
In contrast, robot imitation learning involves an objective mismatch: policies are trained via supervised learning but evaluated through closed-loop environment interactions, where task success depends on many sequential predictions and ground-truth action labels are unavailable at test-time.

\section{Background: Data Attribution via Influence Functions}\label{sec:background}

At a high-level, \textbf{the goal of data attribution} methodologies is to explicitly relate model performance and behavior to the training data, so that we can answer \emph{counterfactual} questions about the contribution of training samples towards test-time predictions. 
Consider a standard supervised learning setting, where we fit model parameters $\theta$ on a given training dataset $\mathcal{D} := \{z^1, \ldots, z^n\}$ of input-label pairs $z^i=(x^i, y^i) \in \mathcal{Z}$ with $\theta(\mathcal{D}) =  \arg \min_{\theta'} \{\calL(\theta'; \calD) := \frac{1}{n} \sum_{i=1}^n \ell(z^i; \theta')\}$. Moreover, let $f(\hat{z}; \theta) \in \R$ be any chosen performance metric on a test sample $\hat{z} = (\hat{x}, \hat{y}) \in \mathcal{Z}$ given model parameters $\theta$ (e.g., cross-entropy loss for a classifier). Then, a data attribution method $\Psi^{\mathrm{out}}: \mathcal{Z} \times \mathcal{Z} \rightarrow \mathbb{R}$ aims to approximate the change in the performance metric $f$ if we were to exclude sample $z^i$ from the model's training data. That is, we aim to design $\Psi^{\mathrm{out}}$ such that $\Psi^{\mathrm{out}}(\hat{z}, z^i) \approx  f\left(\hat{z}; \theta(\mathcal{D} \setminus \{z^i\})\right) - f(\hat{z}; \theta(\mathcal{D}))$.

\textbf{The influence function} is a data attribution technique that approximates $\Psi^{\mathrm{out}}$ \emph{without} retraining any models~\cite{hammoudeh2024training}. 
Consider perturbing the training objective as $\calL_{\epsilon, z}(\theta'; \calD) := \calL(\theta'; \calD) + \epsilon \ell(z, \theta'),$ where we add an infinitesimal weight $\epsilon$ on the loss of some sample $z$ to $\calL$. The \emph{influence function} estimates the change in the performance metric $f$ as a function of $\epsilon$ with a first-order Taylor approximation as
\begin{align}
    \Psi_{\text{inf}}(\hat{z}, z)  &:= \frac{df(\hat{z}; \theta)}{d \epsilon}\bigg|_{\epsilon=0} = -\nabla_\theta f(\hat{z}; \theta(\mathcal{D}))^\top  H_{\theta}^{-1}  \nabla_\theta \ell(z; \theta(\mathcal{D})),
        \label{eq:influence-fn}
\end{align}
where $H_{\theta} = \frac{1}{n} \sum_{i=1}^{n} \nabla_\theta^2 \ell(z^i; \theta(\mathcal{D}))$ denotes the Hessian of the training loss\footnote{\label{fn:track} To reduce the computational cost of \cref{eq:influence-fn}, we use TRAK~\cite{park2023trak}, which leverages random projections and a Gauss-Newton Hessian approximation for efficient influence estimation. This also makes the influence function amenable to the non-smooth, non-convex loss functions in practical deep learning problems, so we assume \cref{eq:influence-fn} is well-defined throughout this paper.}~\cite{koh2017understanding}. Therefore, we can use the influence function to directly approximate the \emph{leave-one-out} influence $\Psi^{\mathrm{out}}$ of a sample $z^i\in\calD$ as $\Psi^{\mathrm{out}}_{\mathrm{inf}}(\hat{z}, z^i) := -\frac{1}{n}\Psi_{\mathrm{inf}}(\hat{z}, z^i)$. In addition, for $z \not \in \calD$ we similarly define the \emph{add-one-in} influence as $\Psi^{\mathrm{in}}_{\mathrm{inf}}(\hat{z}, z):= \frac{1}{n} \Psi_{\mathrm{inf}}(\hat{z}, z) \approx f(\hat{z}; \theta(\calD \cup \{z\})) - f(\hat{z}; \theta(\calD))$ with $z$ excluded from the Hessian $H_{\theta}$.

\section{Problem Formulation}\label{sec:formulation}
\textbf{Imitation Learning (IL):}
The objective of this work is to understand how demonstration data contributes to closed-loop performance in robot imitation learning. Thus, we consider a Markov Decision Process $\langle \mathcal{S}, \mathcal{A}, \mathcal{T}, R, \rho_0 \rangle$ with state space~$\mathcal{S}$, action space~$\mathcal{A}$, transition model~$\mathcal{T}$, reward model~$R$, initial state distribution~$\rho_0$, and finite horizon~$H$. We train a policy~$\pi_\theta$ to minimize a behavior cloning (BC) objective, i.e., $\theta = \arg\min_{\theta'} \{\mathcal{L}_{\text{bc}}(\theta'; \mathcal{D}) := \frac{1}{|\calD| H}\sum_{\xi^i\in\calD}\sum_{(s, a) \in \xi^i} \ell(s, a; \pi_{\theta'})\}$, using a dataset of $n$ expert demonstrations $\mathcal{D} = \{\xi^1, \ldots, \xi^n\}$. Each demonstration $\xi^i = ((s^i_0, a_0^i), \ldots, (s_H^i, a_H^i))$ consists of a state-action trajectory where the robot successfully completes the task. 
We treat a trajectory $\tau = (s_0, a_0, \ldots, s_{H})$ as either a \emph{success} or a \emph{failure}, corresponding to the binary returns $R(\tau) = 1$ and $R(\tau) = -1$ respectively.

Therefore, in IL, we train the policy $\pi_\theta$ to match the distribution of successful behaviors in $\calD$, rather than directly maximize its expected return $J(\pi_\theta) := \E_{p(\tau|\pi_\theta)}[R(\tau)]$.
As a result, the policy's performance is intimately linked to the relative suboptimality of the demonstration data---a function of its quality and composition---not just to validation losses, model capacity, or bias-variance tradeoffs. This makes it extremely challenging to systematically improve performance. 
Recent works underscore that simply scaling demonstration collection may result in datasets that contain substantial redundancies and behaviors that may actually harm policy performance, even though $R(\xi^i) = 1$ for all demonstrations $\xi^i \in \calD$~\cite{belkhale2023data}.

\textbf{Robot Data Curation:} 
While several recent works propose intuitive measures of quality to curate data, we find that such heuristics can misalign with how deep models actually learn, sometimes even worsening test-time performance compared to randomly choosing samples
(see \cref{sec:experiments}). Therefore, we first formally define robot data curation as the problem of identifying demonstration data that maximizes the policy's closed-loop performance.
In particular, assume that we have a \emph{base policy} $\pi_{\theta}$ trained on the demonstration data $\calD$. We consider two settings that are essential to a policy debugging toolchain. The first is that of \emph{data filtering}, where our goal is to identify and remove redundant or harmful demonstrations from $\calD$ that may be limiting the performance of the base policy $\pi_\theta$.

\begin{task}[Filter-$k$ demonstrations]\label{task:filter-k}
    Let $\Xi^-_k = \left\{ S \subseteq \mathcal{D} \;\middle|\; |S| = k \right\}$ denote all possible $k$-demonstration subsets of the training dataset $\mathcal{D} = \{\xi^1, \ldots, \xi^n\}$, where $k \leq n$. Determine which $k$ demonstrations should be removed from $\mathcal{D}$ to maximize policy performance with respect to the task objective $J$. That is, find 
    \begin{align*}
        S^\star =\;& \arg \max _{S \in \Xi^-_k} J(\pi_{\theta}) \quad \mathrm{s.t.} \quad \theta = \arg \min_{\theta'} \ \mathcal{L}_{\mathrm{bc}}(\theta'; \mathcal{D} \setminus S).
    \end{align*}
\end{task}
The second is that of \emph{data selection}, where we seek to guide the subselection of new demonstration data to maximally improve our base policy, given a fixed budget.
\begin{task}[Select-$k$ demonstrations]\label{task:select-k}
    Let $\Xi^+_k = \left\{ S \subseteq \mathcal{H} \;\middle|\; |S| = k \right\}$ denote all possible $k$-demonstration subsets of a holdout dataset $\mathcal{H} = \{\xi^1, \ldots, \xi^{n'}\}$, where $k \leq n'$. Determine which $k$ demonstrations should be added to $\mathcal{D}$ from $\mathcal{H}$ to maximize policy performance with respect to the task objective $J$. That is, find 
    \begin{align*}
        S^\star =\;& \arg \max_{S \in \Xi^+_k} J(\pi_{\theta}) \quad \mathrm{s.t.} \quad \theta = \arg \min_{\theta'} \ \mathcal{L}_{\mathrm{bc}}(\theta'; \mathcal{D} \cup S). 
    \end{align*} 
\end{task}
In \cref{task:select-k}, we consider the problem of identifying the most impactful trajectories from a newly collected batch of demonstrations or from an existing pre-collected dataset, akin to performing quality control.

\textbf{Policy Testing \& Evaluation:} To make progress on \cref{task:filter-k} and \cref{task:select-k}, we assume access to a small dataset of $m$ rollouts $\mathcal{D}_\tau = \{\tau^1, \ldots, \tau^m\} \iid p(\tau | \pi_\theta)$ of the base policy $\pi_\theta$ along with their associated returns $\{R(\tau^1), \ldots, R(\tau^m)\}$ to estimate $J(\pi_\theta)$. This aligns with how we currently evaluate policies in practice~\cite{vincent2024generalizable}, despite lacking principled strategies to leverage evaluations towards BC policy improvement.

\begin{figure}[t]
    \centering
    \includegraphics[width=\linewidth]{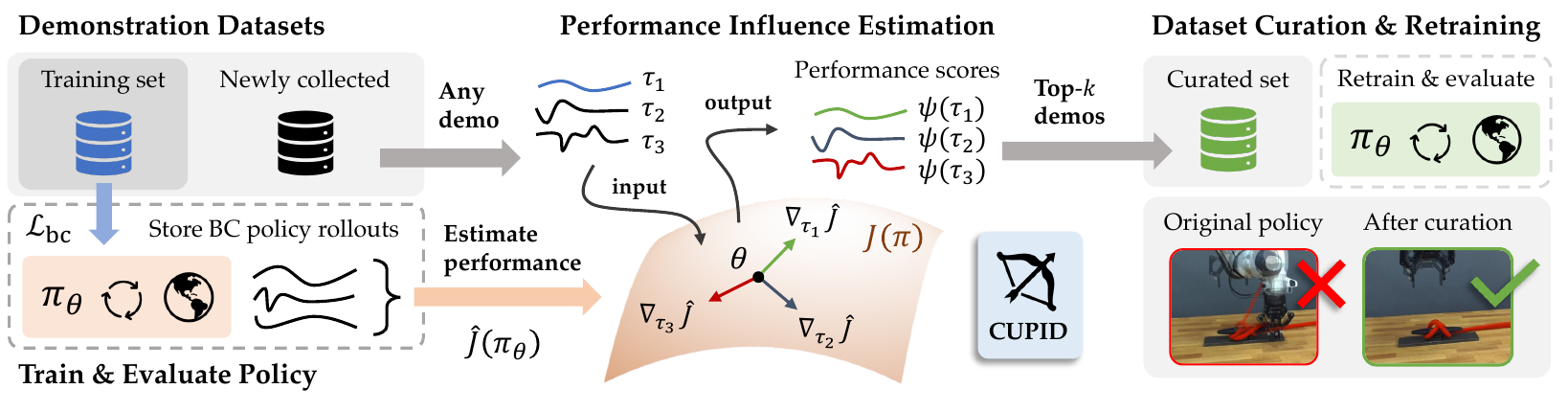}
    \vspace{-18pt}
    \caption{\small
        \textbf{Data curation with \basemethod{}.} Upon training a policy on a set of demonstrations using behavior cloning, we evaluate it online to collect closed-loop rollout trajectories and estimate the policy’s expected return. \basemethod{} ranks demonstration based on their measured influence on this performance estimate and selects the top-$k$. Thus, curating with \basemethod{} results in a dataset of demonstrations that most strongly influences closed-loop policy success.
    }
    \label{fig:method-overview}
\end{figure}

\section{CUPID: Curating Performance-Influencing Demonstrations}\label{sec:method}
In the literature, the desiderata for curating demonstration data appears diverse and often context specific, with recent works valuating demonstrations upon heuristic measures of similarity~\cite{chen2025curating}, compatibility~\cite{gandhi2023eliciting}, uncertainty~\cite{cui2019uncertainty}, and information gain~\cite{hejna2025robotdatacurationmutual}. \textbf{The key insight} of this work
is that solving curation problems, i.e., \cref{task:filter-k} and \cref{task:select-k} (\cref{sec:formulation}), requires causally connecting training data to the policy's closed-loop performance. 
Therefore, we first adapt techniques from data attribution, as defined in \cref{sec:background}, to directly compute the influence of a training demonstration on the performance of a policy. This allows us to use our \emph{performance influence} to directly curate data in alignment with our objectives.

\subsection{Demonstration-Performance Influence}\label{sec:performance-influence}
While existing data attribution methods can trace validation losses back to the training set $\calD$ for curation purposes, the BC loss is not always reflective of a policy's closed-loop performance~\cite{ross2011reduction}. Thus,
we must first develop an analogous notion of the influence function to capture the impact of a \emph{demonstration trajectory} on the \emph{closed-loop performance} of an imitation learning policy.
To do so, we group the BC training objective into trajectory-level losses by introducing  $\ell_{\mathrm{traj}}(\xi ; \pi_{\theta'}) := \frac{1}{H}\sum_{(s,a) \in \xi}\ell(s, a; \pi_{\theta'})$, so that $\calL_{\mathrm{bc}}(\theta'; \calD) = \frac{1}{|\calD|} \sum_{\xi^i \in \calD}\ell_{\mathrm{traj}}(\xi^i ; \pi_{\theta'})$. 
We now formally define the \emph{performance influence} of a demonstration as the application of the influence function (see \cref{eq:influence-fn}) on the policy's expected return:

\begin{definition}[Performance Influence]\label{def:polinf} Let $\xi$ be a demonstration of interest. Suppose we train a policy $\pi_\theta$ to minimize the perturbed BC objective $\calL_{\mathrm{bc}}^{\epsilon, \xi}(\theta' ; \calD) := \calL_{\mathrm{bc}}(\theta'; \calD) + \epsilon \ell_{\mathrm{traj}}(\xi; \pi_{\theta'})$. Then, demonstration $\xi$'s  \emph{\textbf{performance influence}} is the derivative of the policy's expected return $J(\pi_\theta)$ with respect to the weight $\epsilon$. That is,
\begin{align}\label{eq:polinf}
\polinf(\xi) &:= \frac{dJ(\pi_\theta)}{d\epsilon}\bigg|_{\epsilon=0} \nonumber = -\nabla_\theta J(\pi_\theta)^\top H_{\mathrm{bc}}^{-1} \nabla_\theta \ell_{\mathrm{traj}}(\xi; \pi_\theta),
\end{align}
where $H_{\mathrm{bc}} := \nabla^2_\theta\calL_{\mathrm{bc}}(\theta ; \calD)$ denotes the Hessian of the BC objective.
\end{definition}
In essence, \cref{def:polinf} enables us to predictively answer the counterfactual: ``How would the policy's expected return change if we upweighted---or by negating, downweighted---the loss on a demonstration $\xi$ during training?''
While \cref{def:polinf} neatly aligns with the standard definition of the influence function in \cref{eq:influence-fn}---using $J$ as the performance metric and $\ell_{\mathrm{traj}}$ as the demonstration-level loss---we distinguish the \emph{performance influence} from the standard influence function~\cite{koh2017understanding} for two key reasons: (1) The performance influence attributes the \emph{outcome} of a policy's sequential decisions to time-series demonstrations, whereas the existing techniques discussed in \cref{sec:background} only relate an individual labeled prediction to a single training sample; (2) We cannot directly compute $\polinf$ because the policy's expected return $J(\pi_\theta)$ depends on the unknown transition dynamics and reward function.
To alleviate these challenges, we show that we can decompose the \emph{performance influence} into influence scores of individual action predictions, which we define as the \textit{action influence}.
\begin{definition}[Action Influence]\label{def:actinf} The \emph{\textbf{action influence}} of a state-action pair $(s,a)$ on a test state-action pair $(s',a')$ is the influence of $(s,a)$ on the policy's log-likelihood $ \log \pi_\theta(a'|s')$. That is,
    \begin{equation}\label{eq:actinf}
    \actinf((s',a'), (s,a)) := -\nabla_\theta \log\pi_\theta(a'|s')^\top H_{\mathrm{bc}}^{-1}\nabla_\theta \ell(s, a; \pi_\theta).
\end{equation}
\end{definition}
The advantage of the \emph{action influence} is that we can easily compute the quantities in \cref{eq:actinf} given the policy weights $\theta$ and the training demonstrations $\calD$, e.g., using the attribution methods discussed in \cref{sec:background}.
However, we emphasize that computing \emph{action influences} over state-action samples from a policy rollout $\tau \sim p(\tau | \pi_\theta)$ only tells us what demonstration data led to the policy taking those actions, without ascribing value to the resulting outcome (e.g., success or failure).
We now show that the performance influence decomposes into the sum of individual action influences, weighted by the trajectory return $R(\tau)$.

\begin{proposition}\label{prop:polinf}
Assume that $\theta(\calD) = \arg\min_{\theta'} \calL_{\mathrm{bc}}(\theta'; \calD)$, that $\calL_{\mathrm{bc}}$ is twice differentiable in $\theta$, and that $H_{\mathrm{bc}} \succ 0$ is positive definite (i.e., $\theta(\calD)$ is not a saddle point)\footnoteref{fn:track}. Then, it holds that\footnote{Note that the fraction $1/H$ appears from the assumption that all trajectories have equal length, which we make purely for notational simplicity without loss of generality. We refer to \cref{appx:proof-length} for the variable length case.}
\begin{equation}\label{eq:policy-inf-deriv}
    \polinf(\xi) = \E_{\tau\sim p(\tau|\pi_\theta)}\bigg[\frac{R(\tau)}{H} \sum_{(s',a')\in\tau}\sum_{(s,a)\in\xi}\actinf\big((s',a'),(s,a)\big)\bigg].
\end{equation}
\end{proposition}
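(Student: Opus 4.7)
The plan is to reduce the claim to the well-known REINFORCE identity and then substitute it into the definition of $\polinf$. The linearity of the gradient and the Hessian inverse will let the sums over $(s',a')$ and $(s,a)$ factor out cleanly.

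First, I would expand the two gradients that appear in \cref{def:polinf}. For the demonstration side, the trajectory-level loss is defined as an average over its transitions, so
\begin{equation*}
    \nabla_\theta \ell_{\mathrm{traj}}(\xi; \pi_\theta) = \frac{1}{H}\sum_{(s,a)\in\xi}\nabla_\theta \ell(s,a;\pi_\theta),
\end{equation*}
by linearity of the gradient. For the performance side, I would invoke the policy gradient (REINFORCE) identity: since $p(\tau|\pi_\theta) = \rho_0(s_0)\prod_t \mathcal{T}(s_{t+1}|s_t,a_t)\pi_\theta(a_t|s_t)$ and only the $\pi_\theta$ factors depend on $\theta$,
\begin{equation*}
    \nabla_\theta J(\pi_\theta) = \E_{\tau\sim p(\tau|\pi_\theta)}\!\left[R(\tau)\sum_{(s',a')\in\tau}\nabla_\theta \log\pi_\theta(a'|s')\right],
\end{equation*}
which follows from the log-derivative trick $\nabla_\theta p(\tau|\pi_\theta) = p(\tau|\pi_\theta)\nabla_\theta \log p(\tau|\pi_\theta)$ together with the cancellation of the $\theta$-independent dynamics and initial-state terms.

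Next, I would substitute both expressions into $\polinf(\xi) = -\nabla_\theta J(\pi_\theta)^\top H_{\mathrm{bc}}^{-1}\nabla_\theta\ell_{\mathrm{traj}}(\xi;\pi_\theta)$. Pulling the expectation outside the inner product (justified by its linearity, assuming the usual regularity conditions so that differentiation and expectation commute), and pulling the finite sums outside $H_{\mathrm{bc}}^{-1}$, gives
\begin{equation*}
    \polinf(\xi) = \E_{\tau}\!\left[\frac{R(\tau)}{H}\sum_{(s',a')\in\tau}\sum_{(s,a)\in\xi}\bigl(-\nabla_\theta \log\pi_\theta(a'|s')^\top H_{\mathrm{bc}}^{-1}\nabla_\theta \ell(s,a;\pi_\theta)\bigr)\right].
\end{equation*}
The bracketed scalar is exactly $\actinf((s',a'),(s,a))$ from \cref{def:actinf}, which yields the stated identity.

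The routine computations are essentially bookkeeping; the only subtle step is the REINFORCE derivation, and specifically verifying that the dynamics and initial distribution drop out of $\nabla_\theta \log p(\tau|\pi_\theta)$. The positive-definiteness of $H_{\mathrm{bc}}$ and twice differentiability of $\calL_{\mathrm{bc}}$ are what guarantee that $\polinf$ in \cref{def:polinf} is well defined via the implicit function theorem applied to the perturbed optimality condition $\nabla_\theta \calL_{\mathrm{bc}}^{\epsilon,\xi}(\theta;\calD) = 0$, justifying the first-order expression that \cref{def:polinf} inherits from \cref{eq:influence-fn}; I would note this briefly but not re-derive it. The variable-length case deferred to \cref{appx:proof-length} just replaces $1/H$ by the trajectory-specific normalization, and the derivation carries through unchanged.
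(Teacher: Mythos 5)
Your proof is correct and follows essentially the same route as the paper's: apply the influence-function formula to $J$, expand $\nabla_\theta J(\pi_\theta)$ via the \texttt{REINFORCE} log-derivative identity, use linearity of expectation and of the gradient of $\ell_{\mathrm{traj}}$, and identify the resulting scalar with $\actinf$. Your added remarks on why the dynamics cancel and on the implicit-function-theorem justification of \cref{def:polinf} are consistent with, and slightly more explicit than, the paper's presentation.
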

In brief, we prove \cref{prop:polinf} using the log-derivative trick underlying policy gradient methods~\cite{sutton1999policy, williams1992simple} to decompose $\polinf$ into $\actinf$ (see \cref{appx:proof} for proof).
Because \cref{prop:polinf} relates the performance influence to the average action influence that a demonstration $\xi$ has on the closed-loop distribution of policy rollouts, \cref{prop:polinf} directly provides a method to estimate $\polinf$: \newline
\textbf{Estimate $\polinf$:} First, evaluate the policy $\pi_\theta$ online to gather a set of rollouts $\mathcal{D}_\tau = \{\tau^1, \ldots, \tau^m\} \iid p(\tau | \pi_\theta)$ and their associated returns $\{R(\tau^1), \ldots, R(\tau^m)\}$. Then, construct an empirical estimate of the performance influence $\polinfest$ using \cref{eq:policy-inf-deriv}, by averaging action influences across the rollouts in $\calD_\tau$. 

We illustrate the performance influence in \cref{fig:method-overview} and summarize its estimation procedure in \cref{alg:polinf-est}.
\begin{algorithm}[H]
    \small
    \caption{Performance Influence}\label{alg:polinf-est}
    \textbf{Input:} Policy $\pi_\theta$, training data $\calD$, demonstration $\xi$, data attribution method $\Psi$ \\
    Collect rollouts $\mathcal{D}_\tau = \{\tau^1, \ldots, \tau^m\} \iid p(\tau | \pi_\theta)$ and returns $\{R(\tau^1), \ldots, R(\tau^m)\}$ \\
    Use $\Psi, \calD$ to compute $\actinf((s',a'), (s,a))$ for all $(s',a') \in \calD_\tau$, $(s,a) \in \xi$ \\
    Estimate $\widehat{\Psi}_{\pi\text{-}\mathrm{inf}}(\xi) := \frac{1}{m}\sum_{\tau^i \in \calD_\tau} \frac{R(\tau^i)}{H} \sum_{(s',a') \in \tau^i} \sum_{(s,a) \in \xi} \actinf((s',a'), (s,a))$. \\
    \textbf{Output:} Estimated performance influence $\widehat{\Psi}_{\pi\text{-}\mathrm{inf}}(\xi)$
\end{algorithm}

\subsection{Data Curation with Performance Influence}\label{sec:methods-curation}
In this section, we leverage the performance influence $\polinf,$ which we developed in \cref{sec:performance-influence}, to curate data towards the filtering and selection tasks (\cref{task:filter-k} and \cref{task:select-k}) defined in \cref{sec:formulation}. In particular, we use the estimates of $\polinf$ to make the following first-order Taylor approximations on the \emph{leave-one-out} and \emph{add-one-in} influence (as defined in \cref{sec:background}) of a demonstration trajectory as
\begin{minipage}{\linewidth}
\small
\begin{equation*}
    \polinf^{\mathrm{out}}(\xi) := -\frac{\polinfest(\xi)}{|\calD|} \approx J(\pi_{\theta(\calD \setminus \{\xi\})}) - J(\pi_{\theta(\calD)}), \quad
    \polinf^{\mathrm{in}}(\xi):= \frac{\polinfest(\xi)}{|\calD|} \approx  J(\pi_{\theta(\calD \cup \{\xi\})}) -J(\pi_{\theta(\calD)}).
\end{equation*}
\end{minipage}

Then, we use the \emph{leave-one-out} and \emph{add-one-in} influences to counterfactually estimate the change in expected return when removing or adding a set of demonstrations $S$ with a linear approximation as 
$\Delta \widehat{J}(\pi_{\theta(\calD \setminus S)}) \propto \frac{1}{|S|}\sum_{\xi\in S}\polinf^{\mathrm{out}}(\xi)$ and $\Delta \widehat{J}(\pi_{\theta(\calD \cup S)}) \propto \frac{1}{|S|}\sum_{\xi\in S}\polinf^{\mathrm{in}}(\xi)$. As a result, optimally curating data under our approximate linear model on policy performance simply entails selecting the least influential demonstrations from the training data $\calD$---in the case of data filtering---or selecting the most influential demonstrations from a new set of demonstrations $\calH$---in the case of data selection:

\begin{minipage}{0.47\textwidth}
  \textbf{\cref{task:filter-k}: Filter-$k$ Demonstrations}
  \begin{equation}\label{eq:prune-topk}
    S^\star_{\mathrm{out}} = \argtop\text{-}k\big(\{\polinf^{\mathrm{out}}(\xi^i) : \xi^i \in \mathcal{D}\}\big),
  \end{equation}
\end{minipage}
\hfill
\begin{minipage}{0.47\textwidth}
  \textbf{\cref{task:select-k}: Select-$k$ Demonstrations}
  \begin{equation}\label{eq:select-topk}
    S^\star_{\mathrm{in}} = \argtop\text{-}k\big(\{\polinf^{\mathrm{in}}(\xi^i) : \xi^i \in \mathcal{H}\}\big).
  \end{equation}
\end{minipage}

We note that by linearly approximating policy performance changes using $\polinf$, we construct what is commonly termed a (linear) \emph{datamodel}~\cite{ilyas2022datamodels}. 
As shown in NLP \cite{engstrom2024dsdm}, using such first-order approximations for data curation can often greatly improve model performance over manual notions of quality. 

\subsection{Additional Quality Metrics}\label{sec:methods-quality}
In \cref{sec:performance-influence}, we constructed a method to estimate $\polinf$ from a dataset of policy rollouts $\calD_\tau$ by relying on policy gradient methods.
Therefore, the estimated performance influence $\widehat{\Psi}_{\pi\text{-}\mathrm{inf}}$ becomes increasingly noisy as we reduce the number of rollouts $m$ to evaluate the policy---akin to the high variance problem of the \texttt{REINFORCE} algorithm. 
To complement the analysis in \cref{sec:performance-influence}, we explore the integration of a \emph{reward-agnostic, heuristic} demonstration quality metric based on the action influence scores $\actinf$: 
\begin{equation}\label{eq:quality-influence-fn}\small
    \Psi_{\text{qual}}(\xi; \mathcal{D}_\tau) := \frac{1}{m}\sum_{\tau \in \mathcal{D}_\tau} \max_{\tiny{(s',a') \in \tau}} \min_{(\tiny{s,a) \in \xi}} \actinf\big((s',a'), (s,a)\big) - \min_{\tiny{(s',a') \in \tau}} \max_{\tiny{(s,a) \in \xi}} \actinf\big((s',a'), (s,a)\big).
\end{equation}
We base the quality score \cref{eq:quality-influence-fn} on the intuition that we should penalize demonstrations containing outlier or noisy influence scores \cite[Sec. 5.2]{koh2017understanding}, \cite{hejna2025robotdatacurationmutual}.
As such, we posit that this heuristic can reduce variance on tasks requiring precise motion, yet introduce bias uncorrelated with performance in other settings.
Thus, in \cref{sec:experiments}, we investigate when the quality score can complement $\polinf$ to curate data by taking their convex combination, $\alpha \polinf + (1-\alpha)\Psi_{\mathrm{qual}}$, ablating $\alpha=1$ (\basemethod{}) and $\alpha = 1/2$ (\qualitymethod{}).

\section{Experiments}\label{sec:experiments}
We conduct a series of experiments to test the efficacy of \basemethod{} alongside state-of-the-art baselines for robot data curation. These experiments take place across three simulated tasks from the RoboMimic benchmark suite~\cite{pmlr-v164-mandlekar22a} and three real-world tasks with a Franka FR3 manipulator (see \cref{fig:franka-dp-results}). These tasks comprise a taxonomy of settings where data curation may benefit policy performance. For a detailed description of our tasks, datasets, baselines, evaluation protocol, and hardware setup, please refer to \cref{appx:experiments} 

\textbf{Evaluation.} We study the filter-$k$ (\cref{task:filter-k}) and select-$k$ (\cref{task:select-k}) curation tasks wherever applicable. For statistical significance, we start filter-$k$ and select-$k$ from random $\sim2/3$ and $\sim1/3$ subsets in RoboMimic (300 demonstrations per task total), and random $\sim9/10$ and $\sim4/10$ subsets on Franka tasks (120-160 demonstrations per task total), respectively. We use the official convolutional-based diffusion policy implementation~\cite{chi2023diffusion} for all tasks to measure the effect of curation on a state-of-the-art policy architecture. 
Details on the influence function computation for diffusion models are provided in \cref{appx:method}.
We also consider the official $\pi_0$ implementation~\cite{black2410pi0} for real-world tasks. To reflect practical constraints, we limit the rollout budget (i.e., the number of rollouts in $\mathcal{D}_\tau = \{\tau^i\}_{i=1}^m$ a curation algorithm may use, as described in \cref{sec:formulation}) to $m = 100$ and $m = 25$ for simulated and real-world tasks, respectively. We report policy success rates over 500 rollouts averaged over the last 10 policy checkpoints for simulated tasks, and 25 rollouts performed with the last checkpoint for real-world tasks.

\textbf{Baselines.}
We consider baselines from several methodological categories: DemInf~\cite{hejna2025robotdatacurationmutual}---applicable only to filter-$k$ (\cref{task:filter-k})---curates data offline (i.e., without rollouts) by maximizing mutual information, promoting diverse and predictable demonstrations; Demo-SCORE~\cite{chen2025curating} trains binary classifiers to distinguish states from successful and failed rollouts, retaining demonstrations with a high average state success probability; Success Similarity is a custom method that ranks demonstrations by their average state similarity to successful rollouts; Random chooses demonstrations uniformly at random; Oracle approximates an upper bound on performance by curating data with privileged access to ground-truth demonstration labels, e.g., indicating demonstration quality, strategy robustness, or other properties.

\subsection{Setting 1: Improving Policy Performance in Mixed-Quality Regimes}\label{sec:exp-quality}
We first study curation of mixed-quality datasets, where training on lower-quality demonstrations may degrade policy performance~\cite{pmlr-v164-mandlekar22a, hejna2025robotdatacurationmutual}. 
We use the ``Lift,'' ``Square,'' and ``Transport'' tasks from RoboMimic's multi-human (MH) task suite, which provides ground-truth quality labels for demonstrations. 
The ``Lift'' and ``Square'' tasks contain three quality tiers \{“low”, “medium”, “high”\}, while the more complex bi-manual ``Transport'' task contains six quality tiers \{“low-low”, “low-medium”, \ldots\}. 
On hardware, we design the “Figure-8” task (\cref{fig:franka-dp-results}(a)), where the robot must tie a simplified cleat hitch---a knot that follows a figure-8 pattern---requiring precise manipulation of a deformable rope.

\begin{figure}[t]
    \centering
    \includegraphics[width=\linewidth]{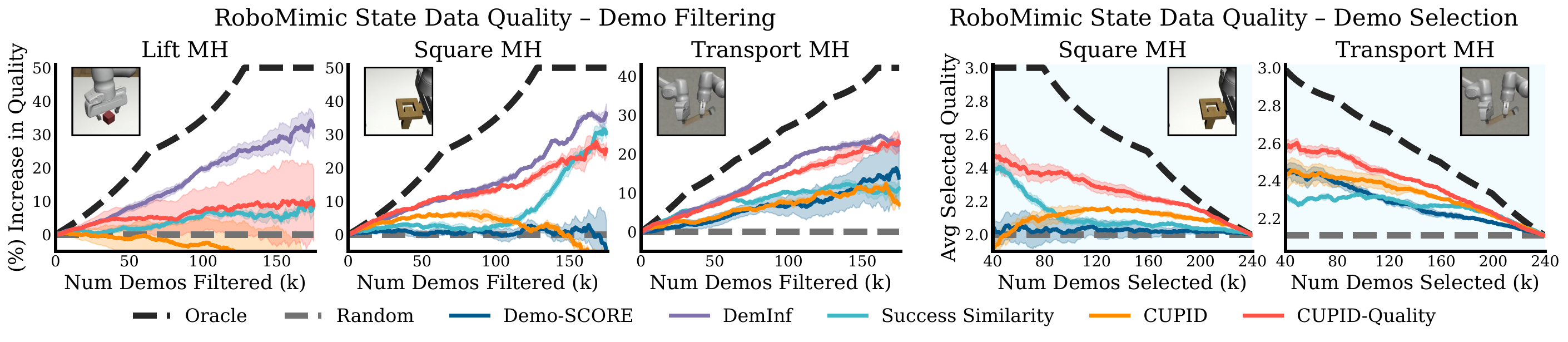}
    \includegraphics[width=\linewidth]{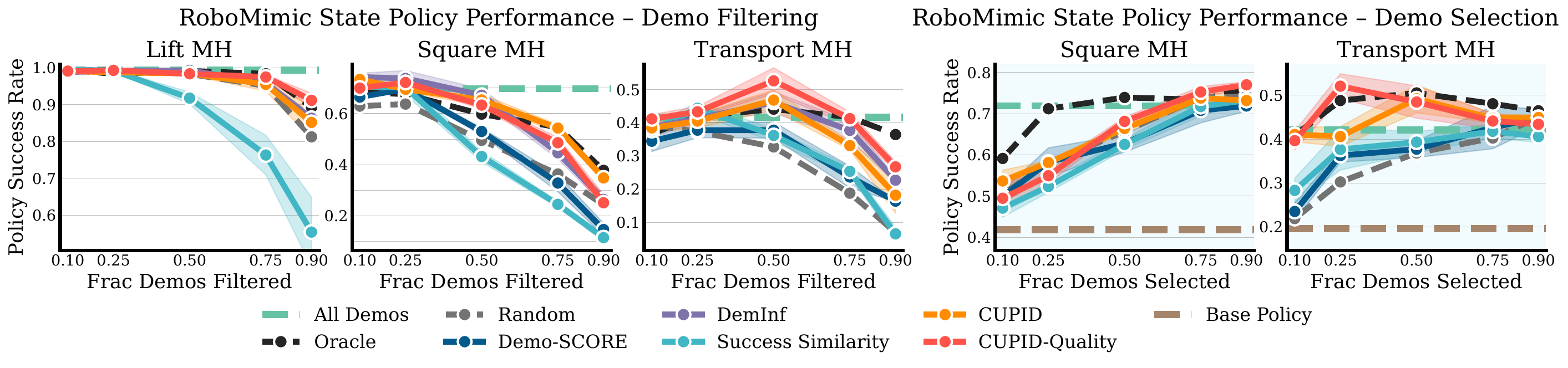}
    \vspace{-20pt}
    \caption{\small
        RoboMimic mixed-quality curation results. \textbf{Top: Data Quality.} Baselines often prioritize demonstration quality (e.g., DemInf~\cite{hejna2025robotdatacurationmutual}), but higher demonstration quality does always translate to higher policy success rates. In contrast, \basemethod{} targets demonstrations that most strongly contribute to downstream policy performance. \textbf{Bottom: Policy Performance.} Diffusion policies trained on data curated by \basemethod{} achieve higher success rates than baselines, despite using demonstrations of perceived lower quality. Although combining performance and quality measures (\qualitymethod{}) yields the best policies on mixed-quality datasets, quality measures can degrade performance in other settings (see \cref{fig:franka-dp-results}).
        Results are averaged over 3 random seeds (500 policies trained across settings). Success rates are computed over 50 rollouts from the last 10 checkpoints (500 rollouts total).
    }
    \label{fig:robomimic-dp-results}
\end{figure}

\textbf{RoboMimic analysis.} \cref{fig:robomimic-dp-results} presents the RoboMimic benchmark results: the top row shows data quality trends for filter-$k$ and select-$k$ across varying $k$, while the bottom row reports success rates of diffusion policies trained on the corresponding curated datasets. As expected, we first observe that DemInf---which targets demonstration quality---curates datasets of the highest overall quality by RoboMimic's ground-truth labels for filter-$k$ (top row, \cref{fig:robomimic-dp-results}). However, policies trained on data curated by \basemethod{} 
consistently match or outperform those of DemInf (bottom row, \cref{fig:robomimic-dp-results}). 
This indicates that human perception of demonstration quality does not necessarily correspond to data that maximizes downstream policy success.
Second, we find the state similarity heuristics employed by Demo-SCORE and Success Similarity to be relatively ineffective in challenging mixed-quality regimes, where successful and failed rollouts exhibit similar states. 
Lastly, \qualitymethod{}, which evenly balances demonstration quality and downstream performance impact (\cref{sec:methods-quality}), attains the highest policy success rates---surpassing the Oracle in 3/5 cases, and achieving an even higher success rate than the official diffusion policy~\cite{chi2023diffusion} on ``Transport MH'' while using fewer than (i) 33\% of the original 300 demonstrations and (ii) 10\% of the model parameters. 
We provide an extended discussion of the RoboMimic results in \cref{appx:results-robomimic-discussion}.

\textbf{Figure-8 analysis.} 
\cref{fig:franka-dp-results}(a) shows diffusion policy results on the real-world ``Figure-8'' task. First, \basemethod{} improves over the base policy's success rate by 38\% (averaged over filtering and selection). Second, as in RoboMimic, \qualitymethod{} further strengthens curation performance, corroborating the utility of quality metrics (\cref{eq:quality-influence-fn}) in mixed-quality regimes. 
As shown in \cref{fig:franka-dp-distr-filter-dataset-results}(a) (filtering; see \cref{appx:select-distr} for selection), both \basemethod{} and \qualitymethod{} successfully retain high-quality demonstrations, whereas baselines such as Demo-SCORE discard some in favor of lower-quality demonstrations. Overall, training on lower-quality demonstrations appears to adversely affect policy performance on the ``Figure-8'' task.

\begin{figure}[t]
    \centering
    \includegraphics[width=\linewidth]{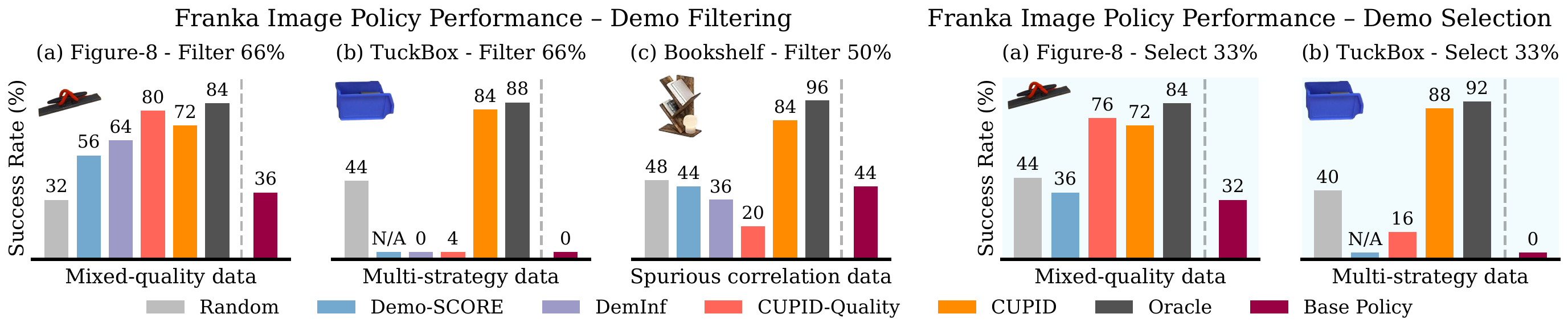}
    \includegraphics[width=\linewidth]{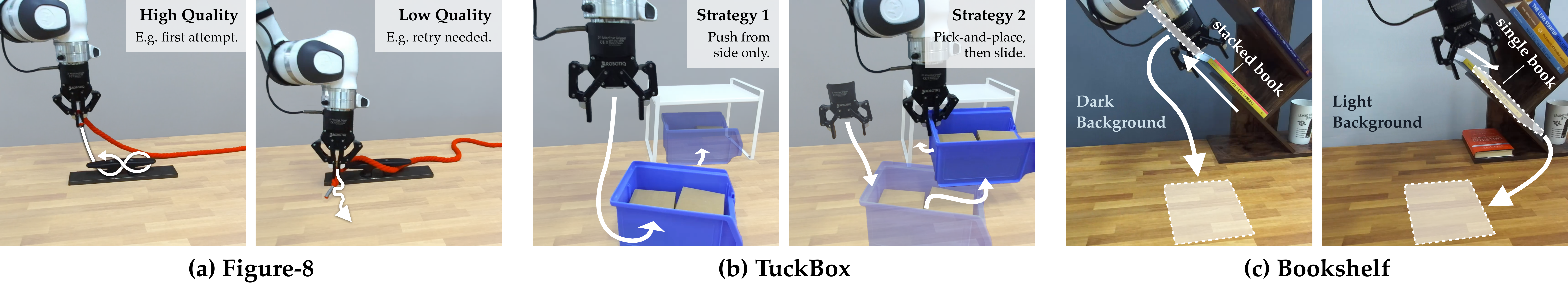}
    \vspace{-18pt}
    \caption{\small
        \textbf{Franka real-world diffusion policy performance.} \basemethod{}, which curates demonstrations \textit{w.r.t.} policy performance, improves success rates on mixed-quality datasets, identifies robust strategies, and disentangles spurious correlations that hinder performance. Although quality measures (e.g., DemInf, \qualitymethod{}) help in mixed-quality settings (Figure-8; \cref{fig:robomimic-dp-results}), they degrade performance when higher-quality demonstrations induce brittle strategies at test time (TuckBox), or when quality is not the primary factor limiting policy success (Bookshelf). Overall, curating data based on performance (\basemethod{}) maintains robustness across these settings. 
    }
    \label{fig:franka-dp-results}
\end{figure}

\subsection{Setting 2: Identifying Robust Test-time Strategies from Policy Failures}\label{sec:exp-strategies}
Heterogeneous imitation learning datasets may contain multiple strategies for solving a task, some of which can fail under distribution shifts at deployment. We design a real-world ``TuckBox'' task, where a robot must tuck a recycling bin under a receptacle by (i) sliding or (ii) first repositioning it via pick-and-place (see \cref{fig:franka-dp-results}(b)). The dataset contains a 2:1 ratio of sliding to pick-and-place demonstrations, making sliding the dominant strategy. At test time, we induce an imperceptible distribution shift by altering the bin's mass distribution, rendering sliding unreliable. 
In this setting, curation aims to rebalance the dataset to promote strategies that are more robust to unforeseen shifts at deployment.

\textbf{TuckBox analysis.} \cref{fig:franka-dp-results}(b) shows the diffusion policy results on ``TuckBox.'' Due to the strategy imbalance, the base policy exclusively exhibits the sliding behavior, resulting in a 100\% failure rate under the distribution shift. This immediately invalidates the use of Demo-SCORE, which requires both successful and failed rollouts. In contrast, \basemethod{} does not require observing successes: by linking failures to the demonstrations that influenced them, curating with \basemethod{} yields a policy that exhibits increased pick-and-place behavior, performing comparably (84\%-88\% success rate) to the Oracle. 
In contrast, both DemInf and \qualitymethod{} incorrectly associate the higher-variance pick-and-place demonstrations with lower quality, resulting in more uniform filtering across strategies (see \cref{fig:franka-dp-distr-filter-dataset-results}(b)). As a result, policies trained on data curated by these baselines default to the brittle sliding strategy at deployment.

\begin{figure}[t]
    \centering
    
    \begin{subfigure}[b]{\linewidth}
        \centering
        \includegraphics[width=0.95\linewidth]{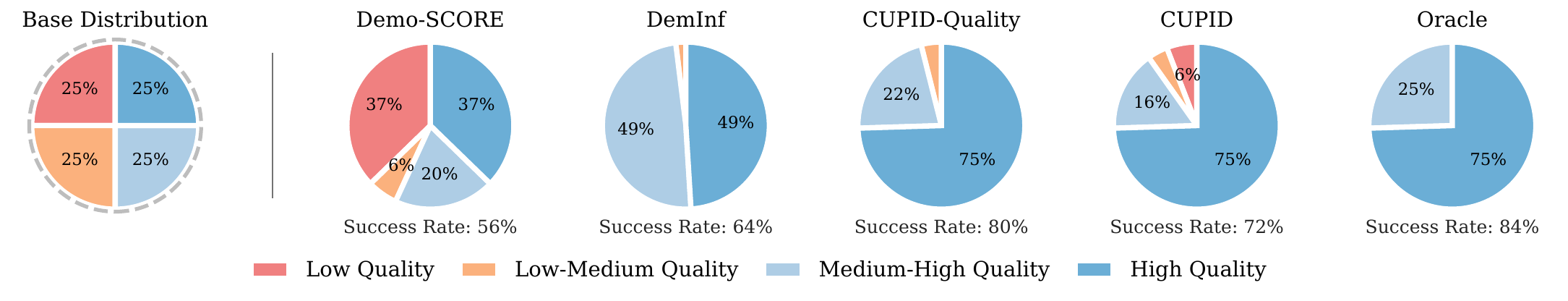}
        \vspace{-6pt}
        \caption{\footnotesize 
            \textbf{Figure-8:} Distribution of curated demonstrations after \textit{filtering} 66\%. Higher-quality demos are better.
        }
        \label{fig:franka-dp-distr-filter-dataset-results-figure8}
    \end{subfigure}

    \vspace{18pt}

    \begin{subfigure}[b]{\linewidth}
        \centering
        \includegraphics[width=0.95\linewidth]{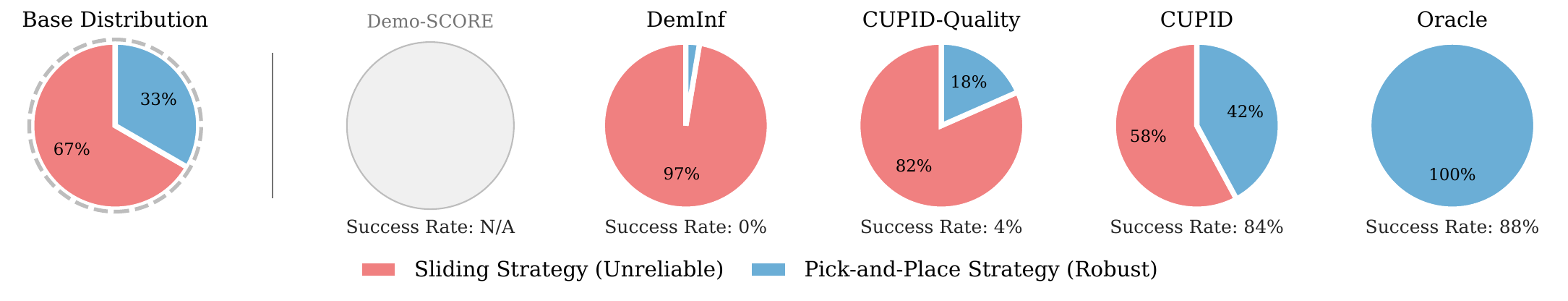}
        \vspace{-6pt}
        \caption{\footnotesize
            \textbf{TuckBox:} Distribution of curated demonstrations after \textit{filtering} 66\%. Pick-and-place demos are better.
        }
        \label{fig:franka-dp-distr-filter-dataset-results-tuckbox}
    \end{subfigure}
    
    \vspace{18pt}
    
    \begin{subfigure}[b]{\linewidth}
        \centering
        \includegraphics[width=0.95\linewidth]{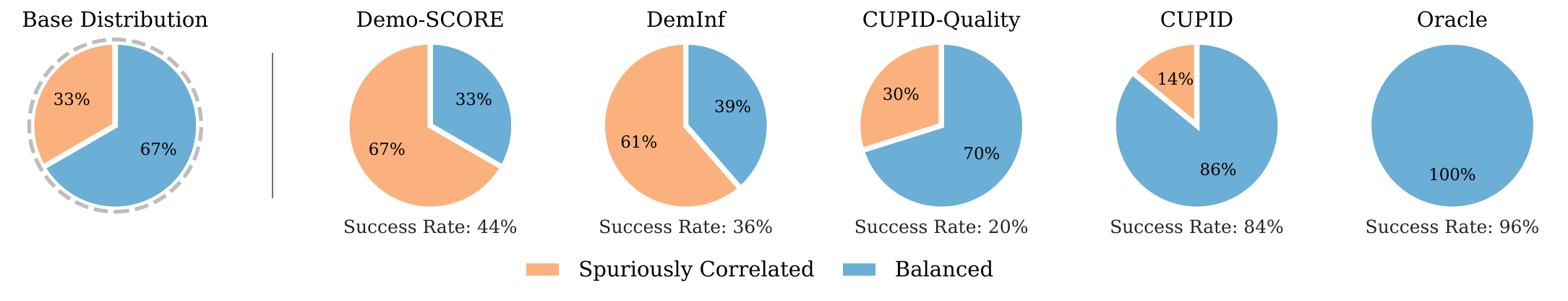}
        \vspace{-6pt}
        \caption{\footnotesize
            \textbf{Bookshelf:} Distribution of curated demonstrations after \textit{filtering} 50\%. Balanced data is better.
        }
        \label{fig:franka-dp-distr-filter-dataset-results-bookshelf}
    \end{subfigure}
    
    \vspace{8pt}   
    
    \caption{\small
        \textbf{Franka diffusion policy curated dataset distributions for filtering (\cref{task:filter-k}).} \basemethod{} filters out lower-quality demonstrations (Figure-8), brittle strategies (TuckBox), and spuriously correlated examples (Bookshelf), improving policy performance across tasks. While curation heuristics employed by baselines may be effective in some cases (e.g., DemInf and \qualitymethod{} in Figure-8), they can lead to suboptimal pruning in others. \\
    }
    \label{fig:franka-dp-distr-filter-dataset-results}
\end{figure}

\subsection{Setting 3: Disentangling Spurious Correlations in Demonstration Data}\label{sec:exp-correlations}
Spurious correlations in training data may cause a policy to rely on non-causal features, hindering generalization to variations in the input or task~\cite{de2019causal}. We design a real-world ``Bookshelf'' task, where a robot must extract a target book via (i) horizontal or (ii) vertical pulling motion, depending on whether another book is stacked above the target book. While both strategies are equally represented in the training set, each co-occurs more frequently with a certain background color (see \cref{fig:franka-dp-results}(c)).
At evaluation, we test the policy under slight variations in the number and position of distractor books, while keeping the white background fixed---the correlate associated with the horizontal pulling behavior.

\textbf{Bookshelf analysis.} Diffusion policy results are shown in \cref{fig:franka-dp-results}(c). 
The base policy achieves only a 44\% success rate, as the presence of the white background often causes the policy to extract the target book horizontally despite another book being stacked on top (causing it to fall). 
Interestingly, by training classifiers to distinguish failed from successful states, Demo-SCORE appears to misattribute failure to the presence of rollout correlates (the stacked book) rather than causal factors (the white background). 
In contrast, \basemethod{} attains an 84\% success rate by identifying demonstrations that causally drive failure---in this case, horizontal pulling motion with a white background---enabling dataset rebalancing that mitigates the effect of spurious correlations (see \cref{fig:franka-dp-distr-filter-dataset-results}(c)). 
As in \cref{sec:exp-strategies}, DemInf and \qualitymethod{} incorrectly prioritize the lower-variance horizontal pulling motion, yielding negligible performance gains.

\ifarxiv
\section{Discussion \& Ablations}\label{sec:discussion}
\subsection{How is curation performance affected by properties of the data and the task?}\label{sec:discussion-properties}

Our mixed-quality curation experiments (\cref{fig:robomimic-dp-results} and \cref{fig:franka-dp-results}(a)) reveal that while curation strengthens performance on ``Transport MH'' and ``Figure-8'' (i.e., a fraction of the demonstrations harm policy performance), removing almost \textit{any} demonstration degrades performance on ``Square MH'' (i.e., all demonstrations appear important). 
In contrast, only about 15\% of the dataset is necessary to maximize performance on ``Lift MH''  (i.e., the dataset is highly redundant)\footnote{Note that \cref{fig:robomimic-dp-results} does not include select-$k$ curation results for ``Lift MH'' because the base policy already achieves a 100\% success rate, leaving no further room for improvement by selecting additional demonstrations.}. 
These results indicate that the potential benefits of data curation depend on properties of both the data and the task. 
For example, one possible hypothesis is that curation is most effective in complex, precision-critical settings (e.g., ``Transport MH''), whereas for tasks with greater tolerance for error (e.g., ``Lift MH''), state-of-the-art policies~\cite{chi2023diffusion} appear less sensitive to---and may even benefit from---training on lower-quality demonstrations.

\subsection{How many policy rollouts are required for effective curation with \basemethod{}?}\label{sec:discussion-rollouts}

\basemethod{} uses a \texttt{REINFORCE}-style estimator to compute the performance influence of each demonstration (\cref{eq:policy-inf-deriv}) for curation.
Thus, the accuracy of estimated performance influences depends on the number of policy rollouts.
While \texttt{REINFORCE}~\cite{sutton1999policy} often yields high-variance gradient estimates under limited rollout budgets, e.g., in reinforcement learning contexts~\cite{greensmith2004variance}, we highlight that our curation objective imposes a lower fidelity requirement: since curation with \basemethod{} involves top-$k$ selection (\cref{sec:methods-curation}), it suffices to rank helpful demonstrations above harmful ones (requiring fewer rollouts) rather than to estimate performance influence precisely (requiring many rollouts). 
As shown in \cref{fig:robomimic-state-data-quality-cupid-rollout}, the ranking of demonstrations stabilizes with approximately $m \in [25, 50]$ rollouts on ``Lift MH'' and ``Square MH,'' and $m \in [50, 100]$ rollouts on ``Transport MH.'' Similarly, we use only $m = 25$ rollouts for our real-world Franka tasks (\cref{fig:franka-dp-results}). These results support the practicality of \basemethod{} under realistic rollout budgets, while noting that more complex tasks (e.g., ``Transport MH'') may benefit from a greater number of rollouts.

\begin{figure}[H]
    \centering
    \includegraphics[width=\linewidth]{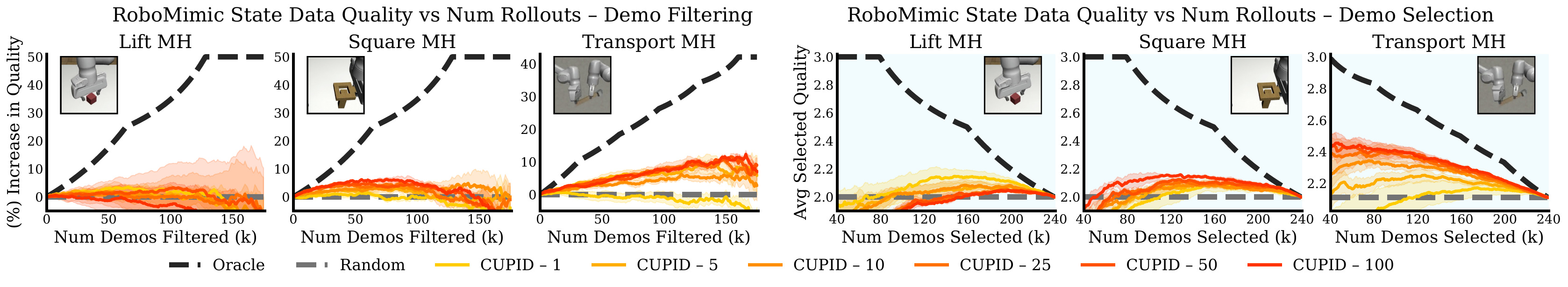}
    \vspace{-20pt}
    \caption{\small
        \textbf{\basemethod{} ablation on the number of policy rollouts.} 
        Performance influences (\cref{eq:policy-inf-deriv}) converge with $m \in [25, 50]$ rollouts on ``Lift MH'' and ``Square MH'' (yielding similar quality trends), and $m \in [50, 100]$ rollouts on ``Transport MH,'' validating the practical applicability of \basemethod{} under realistic rollout budgets. 
        Curation is performed on diffusion policies.
        Results are averaged over 3 random seeds.
        Errors bars represent standard error.
    }
    \label{fig:robomimic-state-data-quality-cupid-rollout}
\end{figure}

\subsection{Can data curated for single-task policies strengthen generalist policy performance?}\label{sec:discussion-pi0-transfer}

\begin{wrapfigure}{rt}{0.39\textwidth}
    \centering
    \vspace{-14pt}
    \includegraphics[width=1.0\textwidth]{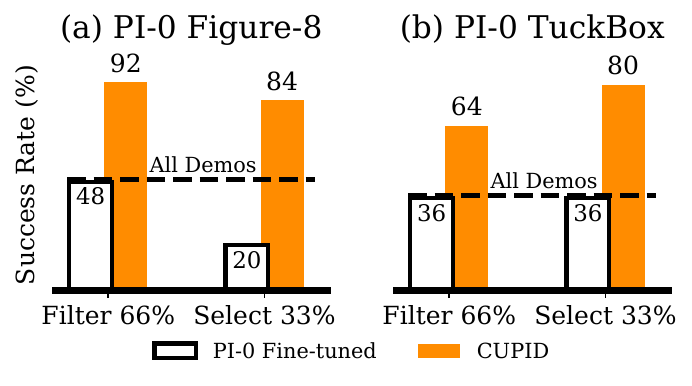}
    \caption{\small 
        Data curated for single-task diffusion policies improves $\pi_0$~\cite{black2410pi0} post-training performance. 
        Additional results in \cref{fig:franka-pi0-results-appx}.
    }
    \vspace{-12pt}
    \label{fig:franka-pi0-transfer-results}
\end{wrapfigure}

In \cref{fig:franka-pi0-transfer-results}, we show that datasets curated by \basemethod{} for single-task diffusion policies can significantly improve the fine-tuned performance of a generalist Vision-Language-Action (VLA) model, $\pi_0$~\cite{black2410pi0}.
While \basemethod{} is, in principle, tailored to the specific policy used during rollouts, it consistently identifies low-quality, stochastic behaviors in ``Figure-8'' and unreliable strategies in ``TuckBox'' (\cref{fig:franka-dp-distr-filter-dataset-results})---both intrinsic properties of the data. 
Filtering these poorer demonstrations (or selecting better ones) is thereby likely to improve the performance \textit{any} policy.
This highlights a promising direction to alleviate the computational cost of \basemethod{} in large-scale settings: use smaller, single-task policies to curate datasets for larger, multi-task models. 

\textit{VLA Robustness.} \cref{fig:franka-pi0-transfer-results} also suggests that scaling the pre-training of VLA models does not inherently enable them to leverage their generalist knowledge to, e.g., \textit{ignore} low-quality behaviors or brittle strategies in demonstration data. That is, data curation still appears important for VLA post-training.

\fi

\section{Conclusion}\label{sec:conclusion}
In this work, we study the problem of data curation for robot imitation learning. We present \basemethod{}, a novel data curation method that uses influence functions to measure the causal impact of a demonstration on the policy's closed-loop performance. Our results highlight the general utility of performance-based curation for two key curation tasks---filtering existing training demonstrations and subselecting new demonstrations---and across diverse curation settings, where a policy's test-time performance varies with the choice of training data. 
\ifarxiv
Among many key problems in robotics, it is inherently difficult to develop strong intuitions about how training data influences downstream policy behavior, and to delineate why a policy trained exclusively on expert demonstration data would exhibit suboptimal performance at deployment. We hope this work spurs continued interest in pursuit of these questions.
\fi

\section{Limitations}\label{sec:limitations}
\textbf{Curation tasks.} The curation tasks considered in this work (\cref{task:filter-k} and \cref{task:select-k}) aim to curate performance-maximizing datasets for a specified filtering or selection quantity of demonstrations $k$. Determining the suitable quantity of demonstrations to curate represents a possible point of extension.

\textbf{Data properties.} Critically, future work should further investigate how properties of the data dictate the extent to which curation can improve policy performance, as discussed in \cref{sec:discussion-properties}. 

\textbf{Data explainability.} Our methods focus on curating existing demonstrations as a first step. However, future work may seek to interpret the properties of influential demonstrations to actively inform subsequent data collection efforts---for example, by providing instructions to data collectors. 

\textbf{Selection methods.} While the \emph{greedy} selection procedures used in \cref{eq:prune-topk} and \cref{eq:select-topk} are tractable to optimize and often improve over quality- and similarity-based measures~\cite{engstrom2024dsdm}, they ignore the interactions between demonstrations in the curated set~\cite{koh2019accuracy, ilyas2022datamodels}. This can temper performance gains when the size of the curated set is large. Future work should investigate higher-order approximations that consider the joint diversity of the curated dataset, as is common in the active learning literature (e.g., \cite[Sec. 4.3]{Settles2012}).

\textbf{Larger datasets.} Estimating performance influences over the full demonstration dataset incurs a computational cost comparable to that of policy training. Reducing this expense in large-scale settings is an important future direction. For example, one could approximate group effects~\cite{koh2019accuracy} via random sampling or limit influence estimation to smaller data subsets identified using coarse-grained heuristics. 

\textbf{Estimator variance.} Finally, although we observe stable performance from \basemethod{} across curation settings, the use of the \texttt{REINFORCE} estimator may result in high variance influence scores, e.g., when the number of policy rollouts is small. In such settings, variance reduction techniques, such as those typically used in reinforcement learning~\cite{greensmith2004variance}, may further improve the fidelity of our influence scores.

\acknowledgments{
    The authors would like to thank \href{https://sungminpark.com/}{Sam Park} and \href{https://andrewilyas.com/}{Andrew Ilyas} for the helpful conversations on data attribution, and \href{https://joeyhejna.com/}{Joey Hejna} for early-staged technical feedback on the project. Toyota Research Institute provided funds to support this work. This work was also supported by the DARPA TIAMAT program. 
}

\bibliography{references}  %

\begin{thebibliography}{66}
\providecommand{\natexlab}[1]{#1}
\providecommand{\url}[1]{\texttt{#1}}
\expandafter\ifx\csname urlstyle\endcsname\relax
  \providecommand{\doi}[1]{doi: #1}\else
  \providecommand{\doi}{doi: \begingroup \urlstyle{rm}\Url}\fi

\bibitem[Grosse et~al.(2023)Grosse, Bae, Anil, Elhage, Tamkin, Tajdini, Steiner, Li, Durmus, Perez, et~al.]{grosse2023studying}
R.~Grosse, J.~Bae, C.~Anil, N.~Elhage, A.~Tamkin, A.~Tajdini, B.~Steiner, D.~Li, E.~Durmus, E.~Perez, et~al.
\newblock Studying large language model generalization with influence functions.
\newblock \emph{arXiv preprint arXiv:2308.03296}, 2023.

\bibitem[Park et~al.(2023)Park, Georgiev, Ilyas, Leclerc, and Madry]{park2023trak}
S.~M. Park, K.~Georgiev, A.~Ilyas, G.~Leclerc, and A.~Madry.
\newblock Trak: Attributing model behavior at scale.
\newblock In \emph{International Conference on Machine Learning}, pages 27074--27113. PMLR, 2023.

\bibitem[Engstrom et~al.(2024)Engstrom, Feldmann, and Madry]{engstrom2024dsdm}
L.~Engstrom, A.~Feldmann, and A.~Madry.
\newblock Dsdm: Model-aware dataset selection with datamodels.
\newblock In \emph{International Conference on Machine Learning}, pages 12491--12526. PMLR, 2024.

\bibitem[Lee et~al.(2022)Lee, Ippolito, Nystrom, Zhang, Eck, Callison-Burch, and Carlini]{lee2021deduplicating}
K.~Lee, D.~Ippolito, A.~Nystrom, C.~Zhang, D.~Eck, C.~Callison-Burch, and N.~Carlini.
\newblock Deduplicating training data makes language models better.
\newblock In S.~Muresan, P.~Nakov, and A.~Villavicencio, editors, \emph{Proceedings of the 60th Annual Meeting of the Association for Computational Linguistics (Volume 1: Long Papers)}, pages 8424--8445, Dublin, Ireland, May 2022. Association for Computational Linguistics.
\newblock \doi{10.18653/v1/2022.acl-long.577}.
\newblock URL \url{https://aclanthology.org/2022.acl-long.577/}.

\bibitem[Tirumala et~al.(2023)Tirumala, Simig, Aghajanyan, and Morcos]{tirumala2023d4}
K.~Tirumala, D.~Simig, A.~Aghajanyan, and A.~Morcos.
\newblock D4: Improving llm pretraining via document de-duplication and diversification.
\newblock \emph{Advances in Neural Information Processing Systems}, 36:\penalty0 53983--53995, 2023.

\bibitem[Albalak et~al.(2024)Albalak, Elazar, Xie, Longpre, Lambert, Wang, Muennighoff, Hou, Pan, Jeong, Raffel, Chang, Hashimoto, and Wang]{albalak2024survey}
A.~Albalak, Y.~Elazar, S.~M. Xie, S.~Longpre, N.~Lambert, X.~Wang, N.~Muennighoff, B.~Hou, L.~Pan, H.~Jeong, C.~Raffel, S.~Chang, T.~Hashimoto, and W.~Y. Wang.
\newblock A survey on data selection for language models.
\newblock \emph{Transactions on Machine Learning Research}, 2024.
\newblock ISSN 2835-8856.
\newblock URL \url{https://openreview.net/forum?id=XfHWcNTSHp}.

\bibitem[O’Neill et~al.(2024)O’Neill, Rehman, Maddukuri, Gupta, Padalkar, Lee, Pooley, Gupta, Mandlekar, Jain, Tung, Bewley, Herzog, Irpan, Khazatsky, Rai, Gupta, Wang, Singh, Garg, Kembhavi, Xie, Brohan, Raffin, Sharma, Yavary, Jain, Balakrishna, Wahid, Burgess-Limerick, Kim, Schölkopf, Wulfe, Ichter, Lu, Xu, Le, Finn, Wang, Xu, Chi, Huang, Chan, Agia, Pan, Fu, Devin, Xu, Morton, Driess, Chen, Pathak, Shah, Büchler, Jayaraman, Kalashnikov, Sadigh, Johns, Foster, Liu, Ceola, Xia, Zhao, Stulp, Zhou, Sukhatme, Salhotra, Yan, Feng, Schiavi, Berseth, Kahn, Wang, Su, Fang, Shi, Bao, Ben~Amor, Christensen, Furuta, Walke, Fang, Ha, Mordatch, Radosavovic, Leal, Liang, Abou-Chakra, Kim, Drake, Peters, Schneider, Hsu, Bohg, Bingham, Wu, Gao, Hu, Wu, Wu, Sun, Luo, Gu, Tan, Oh, Wu, Lu, Yang, Malik, Silvério, Hejna, Booher, Tompson, Yang, Salvador, Lim, Han, Wang, Rao, Pertsch, Hausman, Go, Gopalakrishnan, Goldberg, Byrne, Oslund, Kawaharazuka, Black, Lin, Zhang, Ehsani, Lekkala, Ellis, Rana, Srinivasan, Fang,
  Singh, Zeng, Hatch, Hsu, Itti, Chen, Pinto, Fei-Fei, Tan, Fan, Ott, Lee, Weihs, Chen, Lepert, Memmel, Tomizuka, Itkina, Castro, Spero, Du, Ahn, Yip, Zhang, Ding, Heo, Srirama, Sharma, Kim, Kanazawa, Hansen, Heess, Joshi, Suenderhauf, Liu, Di~Palo, Shafiullah, Mees, Kroemer, Bastani, Sanketi, Miller, Yin, Wohlhart, Xu, Fagan, Mitrano, Sermanet, Abbeel, Sundaresan, Chen, Vuong, Rafailov, Tian, Doshi, Martín-Martín, Baijal, Scalise, Hendrix, Lin, Qian, Zhang, Mendonca, Shah, Hoque, Julian, Bustamante, Kirmani, Levine, Lin, Moore, Bahl, Dass, Sonawani, Song, Xu, Haldar, Karamcheti, Adebola, Guist, Nasiriany, Schaal, Welker, Tian, Ramamoorthy, Dasari, Belkhale, Park, Nair, Mirchandani, Osa, Gupta, Harada, Matsushima, Xiao, Kollar, Yu, Ding, Davchev, Zhao, Armstrong, Darrell, Chung, Jain, Vanhoucke, Zhan, Zhou, Burgard, Chen, Wang, Zhu, Geng, Liu, Liangwei, Li, Lu, Ma, Kim, Chebotar, Zhou, Zhu, Wu, Xu, Wang, Bisk, Cho, Lee, Cui, Cao, Wu, Tang, Zhu, Zhang, Jiang, Li, Li, Iwasawa, Matsuo, Ma, Xu, Cui, Zhang, and
  Lin]{o2024open}
A.~O’Neill, A.~Rehman, A.~Maddukuri, A.~Gupta, A.~Padalkar, A.~Lee, A.~Pooley, A.~Gupta, A.~Mandlekar, A.~Jain, A.~Tung, A.~Bewley, A.~Herzog, A.~Irpan, A.~Khazatsky, A.~Rai, A.~Gupta, A.~Wang, A.~Singh, A.~Garg, A.~Kembhavi, A.~Xie, A.~Brohan, A.~Raffin, A.~Sharma, A.~Yavary, A.~Jain, A.~Balakrishna, A.~Wahid, B.~Burgess-Limerick, B.~Kim, B.~Schölkopf, B.~Wulfe, B.~Ichter, C.~Lu, C.~Xu, C.~Le, C.~Finn, C.~Wang, C.~Xu, C.~Chi, C.~Huang, C.~Chan, C.~Agia, C.~Pan, C.~Fu, C.~Devin, D.~Xu, D.~Morton, D.~Driess, D.~Chen, D.~Pathak, D.~Shah, D.~Büchler, D.~Jayaraman, D.~Kalashnikov, D.~Sadigh, E.~Johns, E.~Foster, F.~Liu, F.~Ceola, F.~Xia, F.~Zhao, F.~Stulp, G.~Zhou, G.~S. Sukhatme, G.~Salhotra, G.~Yan, G.~Feng, G.~Schiavi, G.~Berseth, G.~Kahn, G.~Wang, H.~Su, H.-S. Fang, H.~Shi, H.~Bao, H.~Ben~Amor, H.~I. Christensen, H.~Furuta, H.~Walke, H.~Fang, H.~Ha, I.~Mordatch, I.~Radosavovic, I.~Leal, J.~Liang, J.~Abou-Chakra, J.~Kim, J.~Drake, J.~Peters, J.~Schneider, J.~Hsu, J.~Bohg, J.~Bingham, J.~Wu, J.~Gao, J.~Hu,
  J.~Wu, J.~Wu, J.~Sun, J.~Luo, J.~Gu, J.~Tan, J.~Oh, J.~Wu, J.~Lu, J.~Yang, J.~Malik, J.~Silvério, J.~Hejna, J.~Booher, J.~Tompson, J.~Yang, J.~Salvador, J.~J. Lim, J.~Han, K.~Wang, K.~Rao, K.~Pertsch, K.~Hausman, K.~Go, K.~Gopalakrishnan, K.~Goldberg, K.~Byrne, K.~Oslund, K.~Kawaharazuka, K.~Black, K.~Lin, K.~Zhang, K.~Ehsani, K.~Lekkala, K.~Ellis, K.~Rana, K.~Srinivasan, K.~Fang, K.~P. Singh, K.-H. Zeng, K.~Hatch, K.~Hsu, L.~Itti, L.~Y. Chen, L.~Pinto, L.~Fei-Fei, L.~Tan, L.~J. Fan, L.~Ott, L.~Lee, L.~Weihs, M.~Chen, M.~Lepert, M.~Memmel, M.~Tomizuka, M.~Itkina, M.~G. Castro, M.~Spero, M.~Du, M.~Ahn, M.~C. Yip, M.~Zhang, M.~Ding, M.~Heo, M.~K. Srirama, M.~Sharma, M.~J. Kim, N.~Kanazawa, N.~Hansen, N.~Heess, N.~J. Joshi, N.~Suenderhauf, N.~Liu, N.~Di~Palo, N.~M.~M. Shafiullah, O.~Mees, O.~Kroemer, O.~Bastani, P.~R. Sanketi, P.~T. Miller, P.~Yin, P.~Wohlhart, P.~Xu, P.~D. Fagan, P.~Mitrano, P.~Sermanet, P.~Abbeel, P.~Sundaresan, Q.~Chen, Q.~Vuong, R.~Rafailov, R.~Tian, R.~Doshi, R.~Martín-Martín,
  R.~Baijal, R.~Scalise, R.~Hendrix, R.~Lin, R.~Qian, R.~Zhang, R.~Mendonca, R.~Shah, R.~Hoque, R.~Julian, S.~Bustamante, S.~Kirmani, S.~Levine, S.~Lin, S.~Moore, S.~Bahl, S.~Dass, S.~Sonawani, S.~Song, S.~Xu, S.~Haldar, S.~Karamcheti, S.~Adebola, S.~Guist, S.~Nasiriany, S.~Schaal, S.~Welker, S.~Tian, S.~Ramamoorthy, S.~Dasari, S.~Belkhale, S.~Park, S.~Nair, S.~Mirchandani, T.~Osa, T.~Gupta, T.~Harada, T.~Matsushima, T.~Xiao, T.~Kollar, T.~Yu, T.~Ding, T.~Davchev, T.~Z. Zhao, T.~Armstrong, T.~Darrell, T.~Chung, V.~Jain, V.~Vanhoucke, W.~Zhan, W.~Zhou, W.~Burgard, X.~Chen, X.~Wang, X.~Zhu, X.~Geng, X.~Liu, X.~Liangwei, X.~Li, Y.~Lu, Y.~J. Ma, Y.~Kim, Y.~Chebotar, Y.~Zhou, Y.~Zhu, Y.~Wu, Y.~Xu, Y.~Wang, Y.~Bisk, Y.~Cho, Y.~Lee, Y.~Cui, Y.~Cao, Y.-H. Wu, Y.~Tang, Y.~Zhu, Y.~Zhang, Y.~Jiang, Y.~Li, Y.~Li, Y.~Iwasawa, Y.~Matsuo, Z.~Ma, Z.~Xu, Z.~J. Cui, Z.~Zhang, and Z.~Lin.
\newblock Open x-embodiment: Robotic learning datasets and rt-x models : Open x-embodiment collaboration0.
\newblock In \emph{2024 IEEE International Conference on Robotics and Automation (ICRA)}, pages 6892--6903, 2024.
\newblock \doi{10.1109/ICRA57147.2024.10611477}.

\bibitem[Khazatsky et~al.(2024)Khazatsky, Pertsch, Nair, Balakrishna, Dasari, Karamcheti, Nasiriany, Srirama, Chen, Ellis, Fagan, Hejna, Itkina, Lepert, Ma, Miller, Wu, Belkhale, Dass, Ha, Jain, Lee, Lee, Memmel, Park, Radosavovic, Wang, Zhan, Black, Chi, Hatch, Lin, Lu, Mercat, Rehman, Sanketi, Sharma, Simpson, Vuong, Walke, Wulfe, Xiao, Yang, Yavary, Zhao, Agia, Baijal, Castro, Chen, Chen, Chung, Drake, Foster, Gao, Herrera, Heo, Hsu, Hu, Jackson, Le, Li, Lin, Ma, Maddukuri, Mirchandani, Morton, Nguyen, O'Neill, Scalise, Seale, Son, Tian, Tran, Wang, Wu, Xie, Yang, Yin, Zhang, Bastani, Berseth, Bohg, Goldberg, Gupta, Gupta, Jayaraman, Lim, Malik, Martín-Martín, Ramamoorthy, Sadigh, Song, Wu, Yip, Zhu, Kollar, Levine, and Finn]{khazatsky2024droid}
A.~Khazatsky, K.~Pertsch, S.~Nair, A.~Balakrishna, S.~Dasari, S.~Karamcheti, S.~Nasiriany, M.~K. Srirama, L.~Y. Chen, K.~Ellis, P.~D. Fagan, J.~Hejna, M.~Itkina, M.~Lepert, Y.~J. Ma, P.~T. Miller, J.~Wu, S.~Belkhale, S.~Dass, H.~Ha, A.~Jain, A.~Lee, Y.~Lee, M.~Memmel, S.~Park, I.~Radosavovic, K.~Wang, A.~Zhan, K.~Black, C.~Chi, K.~B. Hatch, S.~Lin, J.~Lu, J.~Mercat, A.~Rehman, P.~R. Sanketi, A.~Sharma, C.~Simpson, Q.~Vuong, H.~R. Walke, B.~Wulfe, T.~Xiao, J.~H. Yang, A.~Yavary, T.~Z. Zhao, C.~Agia, R.~Baijal, M.~G. Castro, D.~Chen, Q.~Chen, T.~Chung, J.~Drake, E.~P. Foster, J.~Gao, D.~A. Herrera, M.~Heo, K.~Hsu, J.~Hu, D.~Jackson, C.~Le, Y.~Li, R.~Lin, Z.~Ma, A.~Maddukuri, S.~Mirchandani, D.~Morton, T.~Nguyen, A.~O'Neill, R.~Scalise, D.~Seale, V.~Son, S.~Tian, E.~Tran, A.~E. Wang, Y.~Wu, A.~Xie, J.~Yang, P.~Yin, Y.~Zhang, O.~Bastani, G.~Berseth, J.~Bohg, K.~Goldberg, A.~Gupta, A.~Gupta, D.~Jayaraman, J.~J. Lim, J.~Malik, R.~Martín-Martín, S.~Ramamoorthy, D.~Sadigh, S.~Song, J.~Wu, M.~C. Yip, Y.~Zhu,
  T.~Kollar, S.~Levine, and C.~Finn.
\newblock {DROID: A Large-Scale In-The-Wild Robot Manipulation Dataset}.
\newblock In \emph{Proceedings of Robotics: Science and Systems}, Delft, Netherlands, July 2024.
\newblock \doi{10.15607/RSS.2024.XX.120}.

\bibitem[Kuhar et~al.(2023)Kuhar, Cheng, Chopra, Bronars, and Xu]{kuhar2023learning}
S.~Kuhar, S.~Cheng, S.~Chopra, M.~Bronars, and D.~Xu.
\newblock Learning to discern: Imitating heterogeneous human demonstrations with preference and representation learning.
\newblock In J.~Tan, M.~Toussaint, and K.~Darvish, editors, \emph{Proceedings of The 7th Conference on Robot Learning}, volume 229 of \emph{Proceedings of Machine Learning Research}, pages 1437--1449. PMLR, 06--09 Nov 2023.

\bibitem[Hejna et~al.(2025)Hejna, Mirchandani, Balakrishna, Xie, Wahid, Tompson, Sanketi, Shah, Devin, and Sadigh]{hejna2025robotdatacurationmutual}
J.~Hejna, S.~Mirchandani, A.~Balakrishna, A.~Xie, A.~Wahid, J.~Tompson, P.~Sanketi, D.~Shah, C.~Devin, and D.~Sadigh.
\newblock Robot data curation with mutual information estimators.
\newblock \emph{arXiv preprint arXiv:2502.08623}, 2025.

\bibitem[Chen et~al.(2025)Chen, Lessing, Liu, and Finn]{chen2025curating}
A.~S. Chen, A.~M. Lessing, Y.~Liu, and C.~Finn.
\newblock Curating demonstrations using online experience.
\newblock \emph{arXiv preprint arXiv:2503.03707}, 2025.

\bibitem[De~Haan et~al.(2019)De~Haan, Jayaraman, and Levine]{de2019causal}
P.~De~Haan, D.~Jayaraman, and S.~Levine.
\newblock Causal confusion in imitation learning.
\newblock \emph{Advances in neural information processing systems}, 32, 2019.

\bibitem[Koh and Liang(2017)]{koh2017understanding}
P.~W. Koh and P.~Liang.
\newblock Understanding black-box predictions via influence functions.
\newblock In \emph{International conference on machine learning}, pages 1885--1894. PMLR, 2017.

\bibitem[Koh et~al.(2019)Koh, Ang, Teo, and Liang]{koh2019accuracy}
P.~W.~W. Koh, K.-S. Ang, H.~Teo, and P.~S. Liang.
\newblock On the accuracy of influence functions for measuring group effects.
\newblock \emph{Advances in neural information processing systems}, 32, 2019.

\bibitem[Hammoudeh and Lowd(2024)]{hammoudeh2024training}
Z.~Hammoudeh and D.~Lowd.
\newblock Training data influence analysis and estimation: A survey.
\newblock \emph{Machine Learning}, 113\penalty0 (5):\penalty0 2351--2403, 2024.

\bibitem[Sutton et~al.(1999)Sutton, McAllester, Singh, and Mansour]{sutton1999policy}
R.~S. Sutton, D.~McAllester, S.~Singh, and Y.~Mansour.
\newblock Policy gradient methods for reinforcement learning with function approximation.
\newblock \emph{Advances in neural information processing systems}, 12, 1999.

\bibitem[Brohan et~al.(2023)Brohan, Brown, Carbajal, Chebotar, Dabis, Finn, Gopalakrishnan, Hausman, Herzog, Hsu, Ibarz, Ichter, Irpan, Jackson, Jesmonth, Joshi, Julian, Kalashnikov, Kuang, Leal, Lee, Levine, Lu, Malla, Manjunath, Mordatch, Nachum, Parada, Peralta, Perez, Pertsch, Quiambao, Rao, Ryoo, Salazar, Sanketi, Sayed, Singh, Sontakke, Stone, Tan, Tran, Vanhoucke, Vega, Vuong, Xia, Xiao, Xu, Xu, Yu, and Zitkovich]{rt12022arxiv}
A.~Brohan, N.~Brown, J.~Carbajal, Y.~Chebotar, J.~Dabis, C.~Finn, K.~Gopalakrishnan, K.~Hausman, A.~Herzog, J.~Hsu, J.~Ibarz, B.~Ichter, A.~Irpan, T.~Jackson, S.~Jesmonth, N.~Joshi, R.~Julian, D.~Kalashnikov, Y.~Kuang, I.~Leal, K.-H. Lee, S.~Levine, Y.~Lu, U.~Malla, D.~Manjunath, I.~Mordatch, O.~Nachum, C.~Parada, J.~Peralta, E.~Perez, K.~Pertsch, J.~Quiambao, K.~Rao, M.~S. Ryoo, G.~Salazar, P.~R. Sanketi, K.~Sayed, J.~Singh, S.~Sontakke, A.~Stone, C.~Tan, H.~Tran, V.~Vanhoucke, S.~Vega, Q.~H. Vuong, F.~Xia, T.~Xiao, P.~Xu, S.~Xu, T.~Yu, and B.~Zitkovich.
\newblock {RT-1: Robotics Transformer for Real-World Control at Scale}.
\newblock In \emph{Proceedings of Robotics: Science and Systems}, Daegu, Republic of Korea, July 2023.
\newblock \doi{10.15607/RSS.2023.XIX.025}.

\bibitem[Zitkovich et~al.(2023)Zitkovich, Yu, Xu, Xu, Xiao, Xia, Wu, Wohlhart, Welker, Wahid, Vuong, Vanhoucke, Tran, Soricut, Singh, Singh, Sermanet, Sanketi, Salazar, Ryoo, Reymann, Rao, Pertsch, Mordatch, Michalewski, Lu, Levine, Lee, Lee, Leal, Kuang, Kalashnikov, Julian, Joshi, Irpan, Ichter, Hsu, Herzog, Hausman, Gopalakrishnan, Fu, Florence, Finn, Dubey, Driess, Ding, Choromanski, Chen, Chebotar, Carbajal, Brown, Brohan, Arenas, and Han]{rt22023arxiv}
B.~Zitkovich, T.~Yu, S.~Xu, P.~Xu, T.~Xiao, F.~Xia, J.~Wu, P.~Wohlhart, S.~Welker, A.~Wahid, Q.~Vuong, V.~Vanhoucke, H.~Tran, R.~Soricut, A.~Singh, J.~Singh, P.~Sermanet, P.~R. Sanketi, G.~Salazar, M.~S. Ryoo, K.~Reymann, K.~Rao, K.~Pertsch, I.~Mordatch, H.~Michalewski, Y.~Lu, S.~Levine, L.~Lee, T.-W.~E. Lee, I.~Leal, Y.~Kuang, D.~Kalashnikov, R.~Julian, N.~J. Joshi, A.~Irpan, B.~Ichter, J.~Hsu, A.~Herzog, K.~Hausman, K.~Gopalakrishnan, C.~Fu, P.~Florence, C.~Finn, K.~A. Dubey, D.~Driess, T.~Ding, K.~M. Choromanski, X.~Chen, Y.~Chebotar, J.~Carbajal, N.~Brown, A.~Brohan, M.~G. Arenas, and K.~Han.
\newblock Rt-2: Vision-language-action models transfer web knowledge to robotic control.
\newblock In J.~Tan, M.~Toussaint, and K.~Darvish, editors, \emph{Proceedings of The 7th Conference on Robot Learning}, volume 229 of \emph{Proceedings of Machine Learning Research}, pages 2165--2183. PMLR, 06--09 Nov 2023.
\newblock URL \url{https://proceedings.mlr.press/v229/zitkovich23a.html}.

\bibitem[{Octo Model Team} et~al.(2024){Octo Model Team}, Ghosh, Walke, Pertsch, Black, Mees, Dasari, Hejna, Xu, Luo, Kreiman, Tan, Chen, Sanketi, Vuong, Xiao, Sadigh, Finn, and Levine]{octo_2023}
{Octo Model Team}, D.~Ghosh, H.~Walke, K.~Pertsch, K.~Black, O.~Mees, S.~Dasari, J.~Hejna, C.~Xu, J.~Luo, T.~Kreiman, Y.~Tan, L.~Y. Chen, P.~Sanketi, Q.~Vuong, T.~Xiao, D.~Sadigh, C.~Finn, and S.~Levine.
\newblock Octo: An open-source generalist robot policy.
\newblock In \emph{Proceedings of Robotics: Science and Systems}, Delft, Netherlands, 2024.

\bibitem[Kim et~al.(2025)Kim, Pertsch, Karamcheti, Xiao, Balakrishna, Nair, Rafailov, Foster, Sanketi, Vuong, Kollar, Burchfiel, Tedrake, Sadigh, Levine, Liang, and Finn]{pmlr-v270-kim25c}
M.~J. Kim, K.~Pertsch, S.~Karamcheti, T.~Xiao, A.~Balakrishna, S.~Nair, R.~Rafailov, E.~P. Foster, P.~R. Sanketi, Q.~Vuong, T.~Kollar, B.~Burchfiel, R.~Tedrake, D.~Sadigh, S.~Levine, P.~Liang, and C.~Finn.
\newblock Openvla: An open-source vision-language-action model.
\newblock In P.~Agrawal, O.~Kroemer, and W.~Burgard, editors, \emph{Proceedings of The 8th Conference on Robot Learning}, volume 270 of \emph{Proceedings of Machine Learning Research}, pages 2679--2713. PMLR, 06--09 Nov 2025.

\bibitem[Black et~al.(2024)Black, Brown, Driess, Esmail, Equi, Finn, Fusai, Groom, Hausman, Ichter, et~al.]{black2410pi0}
K.~Black, N.~Brown, D.~Driess, A.~Esmail, M.~Equi, C.~Finn, N.~Fusai, L.~Groom, K.~Hausman, B.~Ichter, et~al.
\newblock $\pi$0: A vision-language-action flow model for general robot control.
\newblock \emph{URL https://arxiv.org/abs/2410.24164}, 2024.

\bibitem[Mandlekar et~al.(2023)Mandlekar, Nasiriany, Wen, Akinola, Narang, Fan, Zhu, and Fox]{mandlekar2023mimicgen}
A.~Mandlekar, S.~Nasiriany, B.~Wen, I.~Akinola, Y.~Narang, L.~Fan, Y.~Zhu, and D.~Fox.
\newblock Mimicgen: A data generation system for scalable robot learning using human demonstrations.
\newblock In J.~Tan, M.~Toussaint, and K.~Darvish, editors, \emph{Proceedings of The 7th Conference on Robot Learning}, volume 229 of \emph{Proceedings of Machine Learning Research}, pages 1820--1864. PMLR, 06--09 Nov 2023.

\bibitem[Yu et~al.(2023)Yu, Xiao, Tompson, Stone, Wang, Brohan, Singh, Tan, M, Peralta, Hausman, Ichter, and Xia]{yu2023scaling}
T.~Yu, T.~Xiao, J.~Tompson, A.~Stone, S.~Wang, A.~Brohan, J.~Singh, C.~Tan, D.~M, J.~Peralta, K.~Hausman, B.~Ichter, and F.~Xia.
\newblock {Scaling Robot Learning with Semantically Imagined Experience}.
\newblock In \emph{Proceedings of Robotics: Science and Systems}, Daegu, Republic of Korea, July 2023.
\newblock \doi{10.15607/RSS.2023.XIX.027}.

\bibitem[Mandi et~al.(2022)Mandi, Bharadhwaj, Moens, Song, Rajeswaran, and Kumar]{mandi2022cacti}
Z.~Mandi, H.~Bharadhwaj, V.~Moens, S.~Song, A.~Rajeswaran, and V.~Kumar.
\newblock Cacti: A framework for scalable multi-task multi-scene visual imitation learning.
\newblock \emph{arXiv preprint arXiv:2212.05711}, 2022.

\bibitem[Smith et~al.(2024)Smith, Irpan, Arenas, Kirmani, Kalashnikov, Shah, and Xiao]{smith2024steer}
L.~Smith, A.~Irpan, M.~G. Arenas, S.~Kirmani, D.~Kalashnikov, D.~Shah, and T.~Xiao.
\newblock Steer: Flexible robotic manipulation via dense language grounding.
\newblock \emph{arXiv preprint arXiv:2411.03409}, 2024.

\bibitem[Zawalski et~al.(2025)Zawalski, Chen, Pertsch, Mees, Finn, and Levine]{pmlr-v270-zawalski25a}
M.~Zawalski, W.~Chen, K.~Pertsch, O.~Mees, C.~Finn, and S.~Levine.
\newblock Robotic control via embodied chain-of-thought reasoning.
\newblock In P.~Agrawal, O.~Kroemer, and W.~Burgard, editors, \emph{Proceedings of The 8th Conference on Robot Learning}, volume 270 of \emph{Proceedings of Machine Learning Research}, pages 3157--3181. PMLR, 06--09 Nov 2025.

\bibitem[Hejna et~al.(2025)Hejna, Bhateja, Jiang, Pertsch, and Sadigh]{pmlr-v270-hejna25a}
J.~Hejna, C.~A. Bhateja, Y.~Jiang, K.~Pertsch, and D.~Sadigh.
\newblock Remix: Optimizing data mixtures for large scale imitation learning.
\newblock In P.~Agrawal, O.~Kroemer, and W.~Burgard, editors, \emph{Proceedings of The 8th Conference on Robot Learning}, volume 270 of \emph{Proceedings of Machine Learning Research}, pages 145--164. PMLR, 06--09 Nov 2025.

\bibitem[Dass et~al.(2025)Dass, Khaddaj, Engstrom, Madry, Ilyas, and Mart{\'\i}n-Mart{\'\i}n]{dass2025datamil}
S.~Dass, A.~Khaddaj, L.~Engstrom, A.~Madry, A.~Ilyas, and R.~Mart{\'\i}n-Mart{\'\i}n.
\newblock Datamil: Selecting data for robot imitation learning with datamodels.
\newblock \emph{arXiv preprint arXiv:2505.09603}, 2025.

\bibitem[Shah et~al.(2023)Shah, Park, Ilyas, and Madry]{shah2023modeldiff}
H.~Shah, S.~M. Park, A.~Ilyas, and A.~Madry.
\newblock Modeldiff: A framework for comparing learning algorithms.
\newblock In \emph{International Conference on Machine Learning}, pages 30646--30688. PMLR, 2023.

\bibitem[Ghorbani and Zou(2019)]{ghorbani2019data}
A.~Ghorbani and J.~Zou.
\newblock Data shapley: Equitable valuation of data for machine learning.
\newblock In \emph{International conference on machine learning}, pages 2242--2251. PMLR, 2019.

\bibitem[Choe et~al.(2024)Choe, Ahn, Bae, Zhao, Kang, Chung, Pratapa, Neiswanger, Strubell, Mitamura, et~al.]{choe2024your}
S.~K. Choe, H.~Ahn, J.~Bae, K.~Zhao, M.~Kang, Y.~Chung, A.~Pratapa, W.~Neiswanger, E.~Strubell, T.~Mitamura, et~al.
\newblock What is your data worth to gpt? llm-scale data valuation with influence functions.
\newblock \emph{arXiv preprint arXiv:2405.13954}, 2024.

\bibitem[Georgiev et~al.(2025)Georgiev, Rinberg, Park, Garg, Ilyas, Madry, and Neel]{georgiev2024attribute}
K.~Georgiev, R.~Rinberg, S.~M. Park, S.~Garg, A.~Ilyas, A.~Madry, and S.~Neel.
\newblock Attribute-to-delete: Machine unlearning via datamodel matching.
\newblock In \emph{The Thirteenth International Conference on Learning Representations}, 2025.
\newblock URL \url{https://openreview.net/forum?id=3vXpZpOn29}.

\bibitem[Madry et~al.(2024)Madry, Ilyas, Engstrom, Park, and Georgiev]{madry2024icml}
A.~Madry, A.~Ilyas, L.~Engstrom, S.~M. Park, and K.~Georgiev.
\newblock Data attribution at scale.
\newblock \url{https://ml-data-tutorial.org/}, 2024.
\newblock Tutorial at ICML 2024.

\bibitem[Basu et~al.(2021)Basu, Pope, and Feizi]{basu2021influence}
S.~Basu, P.~Pope, and S.~Feizi.
\newblock Influence functions in deep learning are fragile.
\newblock In \emph{International Conference on Learning Representations}, 2021.
\newblock URL \url{https://openreview.net/forum?id=xHKVVHGDOEk}.

\bibitem[Bae et~al.(2022)Bae, Ng, Lo, Ghassemi, and Grosse]{bae2022if}
J.~Bae, N.~Ng, A.~Lo, M.~Ghassemi, and R.~B. Grosse.
\newblock If influence functions are the answer, then what is the question?
\newblock \emph{Advances in Neural Information Processing Systems}, 35:\penalty0 17953--17967, 2022.

\bibitem[Ilyas and Engstrom(2025)]{ilyas2025magic}
A.~Ilyas and L.~Engstrom.
\newblock Magic: Near-optimal data attribution for deep learning.
\newblock \emph{arXiv preprint arXiv:2504.16430}, 2025.

\bibitem[Zheng et~al.(2024)Zheng, Pang, Du, Jiang, and Lin]{zheng2023intriguing}
X.~Zheng, T.~Pang, C.~Du, J.~Jiang, and M.~Lin.
\newblock Intriguing properties of data attribution on diffusion models.
\newblock In \emph{The Twelfth International Conference on Learning Representations}, 2024.
\newblock URL \url{https://openreview.net/forum?id=vKViCoKGcB}.

\bibitem[Georgiev et~al.(2023)Georgiev, Vendrow, Salman, Park, and Madry]{georgiev2023journey}
K.~Georgiev, J.~Vendrow, H.~Salman, S.~M. Park, and A.~Madry.
\newblock The journey, not the destination: How data guides diffusion models.
\newblock \emph{arXiv preprint arXiv:2312.06205}, 2023.

\bibitem[Xia et~al.(2024)Xia, Malladi, Gururangan, Arora, and Chen]{xia2024less}
M.~Xia, S.~Malladi, S.~Gururangan, S.~Arora, and D.~Chen.
\newblock Less: Selecting influential data for targeted instruction tuning.
\newblock In \emph{International Conference on Machine Learning}, pages 54104--54132. PMLR, 2024.

\bibitem[Liu et~al.(2024)Liu, Karbasi, and Rekatsinas]{liu2024tsds}
Z.~Liu, A.~Karbasi, and T.~Rekatsinas.
\newblock Tsds: Data selection for task-specific model finetuning.
\newblock \emph{Advances in Neural Information Processing Systems}, 37, 2024.

\bibitem[Engstrom et~al.(2025)Engstrom, Ilyas, Chen, Feldmann, Moses, and Madry]{engstrom2025optimizing}
L.~Engstrom, A.~Ilyas, B.~Chen, A.~Feldmann, W.~Moses, and A.~Madry.
\newblock Optimizing ml training with metagradient descent.
\newblock \emph{arXiv preprint arXiv:2503.13751}, 2025.

\bibitem[Belkhale et~al.(2023)Belkhale, Cui, and Sadigh]{belkhale2023data}
S.~Belkhale, Y.~Cui, and D.~Sadigh.
\newblock Data quality in imitation learning.
\newblock \emph{Advances in neural information processing systems}, 36:\penalty0 80375--80395, 2023.

\bibitem[Vincent et~al.(2024)Vincent, Nishimura, Itkina, Shah, Schwager, and Kollar]{vincent2024generalizable}
J.~A. Vincent, H.~Nishimura, M.~Itkina, P.~Shah, M.~Schwager, and T.~Kollar.
\newblock How generalizable is my behavior cloning policy? a statistical approach to trustworthy performance evaluation.
\newblock \emph{IEEE Robotics and Automation Letters}, 2024.

\bibitem[Gandhi et~al.(2023)Gandhi, Karamcheti, Liao, and Sadigh]{gandhi2023eliciting}
K.~Gandhi, S.~Karamcheti, M.~Liao, and D.~Sadigh.
\newblock Eliciting compatible demonstrations for multi-human imitation learning.
\newblock In K.~Liu, D.~Kulic, and J.~Ichnowski, editors, \emph{Proceedings of The 6th Conference on Robot Learning}, volume 205 of \emph{Proceedings of Machine Learning Research}, pages 1981--1991. PMLR, 14--18 Dec 2023.

\bibitem[Cui et~al.(2019)Cui, Isele, Niekum, and Fujimura]{cui2019uncertainty}
Y.~Cui, D.~Isele, S.~Niekum, and K.~Fujimura.
\newblock Uncertainty-aware data aggregation for deep imitation learning.
\newblock In \emph{2019 International Conference on Robotics and Automation (ICRA)}, pages 761--767. IEEE, 2019.

\bibitem[Ross et~al.(2011)Ross, Gordon, and Bagnell]{ross2011reduction}
S.~Ross, G.~Gordon, and D.~Bagnell.
\newblock A reduction of imitation learning and structured prediction to no-regret online learning.
\newblock In \emph{Proceedings of the fourteenth international conference on artificial intelligence and statistics}, pages 627--635. JMLR Workshop and Conference Proceedings, 2011.

\bibitem[Williams(1992)]{williams1992simple}
R.~J. Williams.
\newblock Simple statistical gradient-following algorithms for connectionist reinforcement learning.
\newblock \emph{Machine learning}, 8:\penalty0 229--256, 1992.

\bibitem[Ilyas et~al.(2022)Ilyas, Park, Engstrom, Leclerc, and Madry]{ilyas2022datamodels}
A.~Ilyas, S.~M. Park, L.~Engstrom, G.~Leclerc, and A.~Madry.
\newblock Datamodels: Understanding predictions with data and data with predictions.
\newblock In K.~Chaudhuri, S.~Jegelka, L.~Song, C.~Szepesvari, G.~Niu, and S.~Sabato, editors, \emph{Proceedings of the 39th International Conference on Machine Learning}, volume 162 of \emph{Proceedings of Machine Learning Research}, pages 9525--9587. PMLR, 17--23 Jul 2022.

\bibitem[Mandlekar et~al.(2022)Mandlekar, Xu, Wong, Nasiriany, Wang, Kulkarni, Fei-Fei, Savarese, Zhu, and Mart\'in-Mart\'in]{pmlr-v164-mandlekar22a}
A.~Mandlekar, D.~Xu, J.~Wong, S.~Nasiriany, C.~Wang, R.~Kulkarni, L.~Fei-Fei, S.~Savarese, Y.~Zhu, and R.~Mart\'in-Mart\'in.
\newblock What matters in learning from offline human demonstrations for robot manipulation.
\newblock In A.~Faust, D.~Hsu, and G.~Neumann, editors, \emph{Proceedings of the 5th Conference on Robot Learning}, volume 164 of \emph{Proceedings of Machine Learning Research}, pages 1678--1690. PMLR, 08--11 Nov 2022.

\bibitem[Chi et~al.(2023)Chi, Xu, Feng, Cousineau, Du, Burchfiel, Tedrake, and Song]{chi2023diffusion}
C.~Chi, Z.~Xu, S.~Feng, E.~Cousineau, Y.~Du, B.~Burchfiel, R.~Tedrake, and S.~Song.
\newblock Diffusion policy: Visuomotor policy learning via action diffusion.
\newblock \emph{The International Journal of Robotics Research}, page 02783649241273668, 2023.

\bibitem[Greensmith et~al.(2004)Greensmith, Bartlett, and Baxter]{greensmith2004variance}
E.~Greensmith, P.~L. Bartlett, and J.~Baxter.
\newblock Variance reduction techniques for gradient estimates in reinforcement learning.
\newblock \emph{Journal of Machine Learning Research}, 5\penalty0 (Nov):\penalty0 1471--1530, 2004.

\bibitem[Settles(2012)]{Settles2012}
B.~Settles.
\newblock \emph{Active Learning}.
\newblock Morgan \& Claypool Publishers, 2012.

\bibitem[Ho et~al.(2020)Ho, Jain, and Abbeel]{ho2020denoising}
J.~Ho, A.~Jain, and P.~Abbeel.
\newblock Denoising diffusion probabilistic models.
\newblock \emph{Advances in neural information processing systems}, 33:\penalty0 6840--6851, 2020.

\bibitem[Song et~al.(2021)Song, Sohl-Dickstein, Kingma, Kumar, Ermon, and Poole]{song2020score}
Y.~Song, J.~Sohl-Dickstein, D.~P. Kingma, A.~Kumar, S.~Ermon, and B.~Poole.
\newblock Score-based generative modeling through stochastic differential equations.
\newblock In \emph{International Conference on Learning Representations}, 2021.
\newblock URL \url{https://openreview.net/forum?id=PxTIG12RRHS}.

\bibitem[Lin et~al.(2025)Lin, Tao, Dong, and Xu]{lin2024diffusion}
J.~Lin, L.~Tao, M.~Dong, and C.~Xu.
\newblock Diffusion attribution score: Evaluating training data influence in diffusion model.
\newblock In \emph{The Thirteenth International Conference on Learning Representations}, 2025.
\newblock URL \url{https://openreview.net/forum?id=kuutidLf6R}.

\bibitem[Martens(2020)]{martens2020insights}
J.~Martens.
\newblock New insights and perspectives on the natural gradient method.
\newblock \emph{Journal of Machine Learning Research}, 21\penalty0 (146):\penalty0 1--76, 2020.
\newblock URL \url{http://jmlr.org/papers/v21/17-678.html}.

\bibitem[Mlodozeniec et~al.(2025)Mlodozeniec, Eschenhagen, Bae, Immer, Krueger, and Turner]{mlodozeniec2024influence}
B.~K. Mlodozeniec, R.~Eschenhagen, J.~Bae, A.~Immer, D.~Krueger, and R.~E. Turner.
\newblock Influence functions for scalable data attribution in diffusion models.
\newblock In \emph{The Thirteenth International Conference on Learning Representations}, 2025.
\newblock URL \url{https://openreview.net/forum?id=esYrEndGsr}.

\bibitem[Xie et~al.(2024)Xie, Li, Bai, and Hsieh]{xie2024data}
T.~Xie, H.~Li, A.~Bai, and C.-J. Hsieh.
\newblock Data attribution for diffusion models: Timestep-induced bias in influence estimation.
\newblock \emph{Transactions on Machine Learning Research}, 2024.
\newblock ISSN 2835-8856.
\newblock URL \url{https://openreview.net/forum?id=P3Lyun7CZs}.

\bibitem[Johnson et~al.(1984)Johnson, Lindenstrauss, et~al.]{johnson1984extensions}
W.~B. Johnson, J.~Lindenstrauss, et~al.
\newblock Extensions of lipschitz mappings into a hilbert space.
\newblock \emph{Contemporary mathematics}, 26\penalty0 (189-206):\penalty0 1, 1984.

\bibitem[Khatib(2003)]{khatib2003unified}
O.~Khatib.
\newblock A unified approach for motion and force control of robot manipulators: The operational space formulation.
\newblock \emph{IEEE Journal on Robotics and Automation}, 3\penalty0 (1):\penalty0 43--53, 2003.

\bibitem[Hu et~al.(2022)Hu, Shen, Wallis, Allen-Zhu, Li, Wang, Wang, Chen, et~al.]{hu2022lora}
E.~J. Hu, Y.~Shen, P.~Wallis, Z.~Allen-Zhu, Y.~Li, S.~Wang, L.~Wang, W.~Chen, et~al.
\newblock Lora: Low-rank adaptation of large language models.
\newblock \emph{ICLR}, 1\penalty0 (2):\penalty0 3, 2022.

\bibitem[Agia et~al.(2025)Agia, Sinha, Yang, Cao, Antonova, Pavone, and Bohg]{agia2024unpacking}
C.~Agia, R.~Sinha, J.~Yang, Z.~Cao, R.~Antonova, M.~Pavone, and J.~Bohg.
\newblock Unpacking failure modes of generative policies: Runtime monitoring of consistency and progress.
\newblock In P.~Agrawal, O.~Kroemer, and W.~Burgard, editors, \emph{Proceedings of The 8th Conference on Robot Learning}, volume 270 of \emph{Proceedings of Machine Learning Research}, pages 689--723. PMLR, 06--09 Nov 2025.

\bibitem[Dai et~al.(2024)Dai, Lee, Fazeli, and Chai]{dai2024racer}
Y.~Dai, J.~Lee, N.~Fazeli, and J.~Chai.
\newblock Racer: Rich language-guided failure recovery policies for imitation learning.
\newblock \emph{arXiv preprint arXiv:2409.14674}, 2024.

\bibitem[Sinha et~al.(2024)Sinha, Elhafsi, Agia, Foutter, Schmerling, and Pavone]{sinha2024real}
R.~Sinha, A.~Elhafsi, C.~Agia, M.~Foutter, E.~Schmerling, and M.~Pavone.
\newblock {Real-Time Anomaly Detection and Reactive Planning with Large Language Models}.
\newblock In \emph{Proceedings of Robotics: Science and Systems}, Delft, Netherlands, July 2024.
\newblock \doi{10.15607/RSS.2024.XX.114}.

\bibitem[He et~al.(2016)He, Zhang, Ren, and Sun]{he2016deep}
K.~He, X.~Zhang, S.~Ren, and J.~Sun.
\newblock Deep residual learning for image recognition.
\newblock In \emph{Proceedings of the IEEE conference on computer vision and pattern recognition}, pages 770--778, 2016.

\bibitem[Oquab et~al.(2023)Oquab, Darcet, Moutakanni, Vo, Szafraniec, Khalidov, Fernandez, Haziza, Massa, El-Nouby, et~al.]{oquab2023dinov2}
M.~Oquab, T.~Darcet, T.~Moutakanni, H.~Vo, M.~Szafraniec, V.~Khalidov, P.~Fernandez, D.~Haziza, F.~Massa, A.~El-Nouby, et~al.
\newblock Dinov2: Learning robust visual features without supervision.
\newblock \emph{arXiv preprint arXiv:2304.07193}, 2023.

\end{thebibliography}

\newpage
\newlength{\defaultparskip}
\setlength{\defaultparskip}{\parskip}
\setlength{\parskip}{1em}

\section*{Appendix Overview -- Curating Data your Robot Loves with Influence Functions}
The appendix offers additional details \textit{w.r.t.} the implementation of \basemethod{} (\cref{appx:method}), the experiments conducted (\cref{appx:experiments}), along with extended results and analysis (\cref{appx:results}), and finally, supporting derivations for our data curation methods (\cref{appx:derivations}). Videos and code are made available at: \href{https://cupid-curation.github.io}{https://cupid-curation.github.io}.

\begin{appendices}

\startcontents[sections]
\printcontents[sections]{l}{1}{\setcounter{tocdepth}{3}}

\clearpage

\setlength{\parskip}{\defaultparskip}

\section{Implementation Details}\label{appx:method}

\subsection{Influence Functions for Diffusion Policies}\label{appx:actinf-dp}

For ease of reference in this section, we restate the definition of the action influence (\cref{def:actinf}) and the proposition establishing performance influence (\cref{prop:polinf}), both originally introduced in \cref{sec:method}.

\paragraph{Restatement of \cref{def:actinf}.}\hspace{-0.8em} 
\textit{The \emph{\textbf{action influence}} of a state-action pair $(s,a)$ on a test state-action pair $(s',a')$ is the influence of $(s,a)$ on the policy's log-likelihood $ \log \pi_\theta(a'|s')$. That is,}
\begin{equation*}
    \actinf((s',a'), (s,a)) := -\nabla_\theta \log\pi_\theta(a'|s')^\top H_{\mathrm{bc}}^{-1}\nabla_\theta \ell(s, a; \pi_\theta).
\end{equation*}

\paragraph{Restatement of \cref{prop:polinf}.}\hspace{-0.8em}
\textit{Assume that $\theta(\calD) = \arg\min_{\theta'} \calL_{\mathrm{bc}}(\theta'; \calD)$, that $\calL_{\mathrm{bc}}$ is twice differentiable in $\theta$, and that $H_{\mathrm{bc}} \succ 0$ is positive definite (i.e., $\theta(\calD)$ is not a saddle point)\footnoteref{fn:track}. Then, it holds that}
\begin{equation*}
    \polinf(\xi) = \E_{\tau\sim p(\tau|\pi_\theta)}\bigg[\frac{R(\tau)}{H} \sum_{(s',a')\in\tau}\sum_{(s,a)\in\xi}\actinf\big((s',a'),(s,a)\big)\bigg].
\end{equation*}
\textit{where $\polinf(\xi)$ is the \emph{\textbf{performance influence}} of a demonstration $\xi$ (as introduced in \cref{def:polinf}).}

\subsubsection*{Computing the Action Influence}
Although \cref{prop:polinf} provides a clean mechanism to attribute policy performance to its training data by leveraging influence scores on action log-likelihoods, computing $\nabla_\theta \log\pi_\theta(a'|s')$ (in the action influence $\actinf$) for diffusion-based policy architectures is nontrivial due to the iterative denoising process~\cite{ho2020denoising, song2020score}. 
Instead, various works outside robotics propose to approximate the log-likelihood with the denoising loss $\ell(s', a'; \pi_{\theta})$ for the purpose of data attribution~\cite{georgiev2023journey}, because the denoising loss is proportionate to the variational lower bound on $\log \pi_\theta(a'|s')$. In \cref{sec:experiments}, we apply a similar approximation to perform data attribution on state-of-the-art diffusion policies~\cite{chi2023diffusion}, which we describe below.

\textbf{Diffusion Policy:} Consider the standard diffusion policy architecture~\cite{chi2023diffusion}. An action $a := a^0$ is generated by iteratively denoising an initially random action $a^T \sim \calN(0, 1)$ over $T$ steps as $a^T, \ldots, a^0$ using a noise prediction network $\epsilon_\theta$, where $a^i$ denotes the generated action at the $i$-th denoising iteration. Following the imitation learning setting described in \cref{sec:formulation}, the parameters $\theta$ of the noise prediction network $\epsilon_\theta$ are fit to the BC objective as $\theta = \arg\min_{\theta'} \{\mathcal{L}_{\text{bc}}(\theta'; \mathcal{D}) := \frac{1}{|\calD| H}\sum_{\xi^i\in\calD}\sum_{(s, a) \in \xi^i} \ell(s, a; \pi_{\theta'})\}$.
Here, the noise prediction network $\epsilon_\theta$ is trained to predict random noise $\epsilon^i \sim \calN(0, 1)$ added to the action $a$ at randomly sampled timesteps $i\sim \calU[0, T)$ of the diffusion process using the loss function $\ell$ defined as
\begin{equation}\label{eq:dp-loss}
    \ell(s, a; \pi_{\theta'}) := \mathbb{E}_{\epsilon^i, i} \left[||\epsilon^i - \epsilon_{\theta'}(\sqrt{\bar\alpha_i} a + \sqrt{1 - \bar{\alpha}_i}\epsilon^i, s, i)||^2 \right],
\end{equation}
where the constants $\bar \alpha_i$ depend on the chosen noise schedule of the diffusion process. 

\textbf{Influence Approximations:} Since the denoising loss $\ell$ in \cref{eq:dp-loss} is proportionate to the variational lower bound on the action log-likelihood $\log \pi_\theta(a|s)$, it may seem intuitive to substitute $\nabla_\theta\log \pi_\theta(a'|s')$ with $-\nabla_\theta\ell(s', a'; \pi_\theta)$---assuming gradient alignment---to approximate the action influence (\cref{eq:actinf}) as
\begin{equation}\label{eq:actinf-approx-loss}
    \actinf((s',a'), (s,a)) \approx \nabla_\theta\ell(s', a'; \pi_\theta)^\top H_{\mathrm{bc}}^{-1}\nabla_\theta \ell(s, a; \pi_\theta).
\end{equation}
A similar approach is taken by \citet{georgiev2023journey} for attributing the generations of image-based diffusion models. However, consistent with more recent results in the data attribution literature~\cite{zheng2023intriguing, lin2024diffusion}, we find this approximation to work poorly in practice, with highly influential training samples $(s, a) \in \mathcal{D}$ rarely reflecting the test-time transitions $(s', a') \in \tau$ over which the action influences are computed. 
Instead, we follow the approach of \citet{zheng2023intriguing}, which entails replacing both $\log \pi_\theta(a'|s')$ and $\ell(s, a; \pi_\theta)$ in \cref{eq:actinf} with a surrogate, label-agnostic output function $\ell_{\mathrm{square}}(s, a; \pi_\theta) := \mathbb{E}_{\epsilon^i, i} [||\epsilon_{\theta}(\sqrt{\bar\alpha_i} a + \sqrt{1 - \bar{\alpha}_i}\epsilon^i, s, i)||^2]$, making our final approximation of the action influence
\begin{equation}\label{eq:actinf-approx-square}
    \actinf((s',a'), (s,a)) \approx \nabla_\theta \ell_{\mathrm{square}}(s', a'; \pi_\theta)^\top  H_{\mathrm{square}}^{-1}\nabla_\theta \ell_{\mathrm{square}}(s, a; \pi_\theta).
\end{equation}
Here, $H_{\mathrm{square}} = \frac{1}{|\calD| H}\sum_{\xi^i\in\calD}\sum_{(s, a) \in \xi^i} \nabla_\theta \ell_{\mathrm{square}}(s, a; \pi_\theta) \nabla_\theta \ell_{\mathrm{square}}(s, a; \pi_\theta)^\top$ is the Gauss-Newton approximation of the Hessian---as introduced by \citet{martens2020insights} and applied for stable and efficient influence estimation in \cite{park2023trak, bae2022if}---under the surrogate output function $\ell_{\mathrm{square}}$. 

\textbf{Additional Remarks:} While the use of $\ell_{\mathrm{square}}$ may seem counterintuitive at first, it offers three key advantages for computing action influences: 
\begin{enumerate}
    \item Leave-one-out influences (\cref{sec:background}) computed using $\ell_{\mathrm{square}}$ (\cref{eq:actinf-approx-square}) are empirically found to correlate better with actual changes in a diffusion model's loss---i.e., the difference $\ell(s', a'; \pi_{\theta(\mathcal{D} \setminus (s, a))}) - \ell(s', a'; \pi_{\theta(\mathcal{D})})$---than those computed using the loss $\ell$ (\cref{eq:actinf-approx-loss})~\cite{zheng2023intriguing}.
    \item Theoretical analysis also shows that $\ell_{\mathrm{square}}$ more closely aligns with a distributional formulation of the leave-one-out influence compared to the loss $\ell$~\cite{lin2024diffusion}. In the case of diffusion policies, this distributional formulation would seek to design $\actinf$ such that it approximates the \textit{leave-one-out divergence} $\actinf((s', a')), (s, a)) \approx D_{\mathrm{KL}}( \pi_{\theta(\mathcal{D})} (a' | s') || \pi_{\theta(\mathcal{D} \setminus (s, a))}(a' | s'))$.
    \item Using $\ell_{\mathrm{square}}$ significantly reduces the computational cost of computing action influences for policies with high-dimensional action spaces, because the $\ell^2$-norm collapses the model's prediction into a scalar $||\epsilon_{\theta}(\sqrt{\bar\alpha_i} a + \sqrt{1 - \bar{\alpha}_i}\epsilon^i, s, i)||^2$. As a result, computing \cref{eq:actinf-approx-square} requires only a single model gradient $\nabla_\theta \ell_{\mathrm{square}}$ per training and test sample. In contrast, while the technique proposed by \citet{lin2024diffusion} offers a more accurate estimate of the leave-one-out divergence $D_{\mathrm{KL}}( \pi_{\theta(\mathcal{D})} (a' | s') || \pi_{\theta(\mathcal{D} \setminus (s, a))}(a' | s'))$, its computational cost scales linearly with the dimensionality of the model's output, which may be prohibitive.
\end{enumerate}

\textbf{Accuracy-Efficiency Tradeoff:} We note that our approach for computing the performance influence of a demonstration (\cref{eq:policy-inf-deriv}) is agnostic to the choice of influence estimation technique~\cite{georgiev2023journey, zheng2023intriguing, lin2024diffusion, mlodozeniec2024influence, xie2024data}, allowing practitioners to trade off between accuracy and efficiency based on available computational resources, and enabling integration of improved data attribution methods (e.g., \cite{ilyas2025magic}) in the future.

\subsection{CUPID Hyperparameters}\label{appx:hyperparameters}
We use the same set of hyperparameters for \basemethod{} and \qualitymethod{} across all experiments.

\textbf{Performance Influence (\cref{eq:policy-inf-deriv}):} For all tasks, we define the trajectory return to be $R(\tau) = 1$ if $\tau$ completes the task and $R(\tau) = -1$ otherwise. As a result, every rollout trajectory $\tau \sim p(\cdot|\pi_\theta)$ provides information on the utility of each demonstration toward the policy's closed-loop performance. We also found \basemethod{} to work with alternative return definitions---for example, focusing solely on successful rollouts by setting $R(\tau) = 0$ when $\tau$ fails. However, such choices may increase sample complexity.

\textbf{Action Influence (\cref{eq:actinf-approx-square}):} The action influence requires computing the gradient of an expectation $\nabla_\theta\ell_{\mathrm{square}}(s, a; \pi_\theta) = \nabla_\theta\mathbb{E}_{\epsilon^i, i} [||\epsilon_{\theta}(\sqrt{\bar\alpha_i} a + \sqrt{1 - \bar{\alpha}_i}\epsilon^i, s, i)||^2]$. For all tasks, we approximate the expectation using a batch of $B = 64$ samples $(\epsilon^{(b)}, i^{(b)})$, where $\epsilon^{(b)} \sim \mathcal{N}(0, 1)$ and $i^{(b)} \sim \mathcal{U}[0, T)$ are sampled independently.

\textbf{Data Attribution:} We leverage TRAK~\cite{park2023trak} to efficiently compute action influences as defined in \cref{eq:actinf-approx-square}. First, TRAK uses random projections $\mathbf{P} \sim \calN(0, 1)^{p \times d}$, where $p$ is the number of model parameters and $d << p$ is the specified projection dimension, to reduce the dimensionality of the gradients as $g_\theta = \mathbf{P}^\top\nabla_\theta \ell_{\mathrm{square}}$ while preserving their inner products $g_\theta \cdot g_\theta \approx \nabla_\theta \ell_{\mathrm{square}} \cdot \nabla_\theta \ell_{\mathrm{square}}$~\cite{johnson1984extensions}. Second, TRAK ensembles influence scores over $C$ independently trained models (i.e., from different seeds) to account for non-determinism in learning. In our experiments, we use the standard projection dimension $d = 4000$ and minimize computational cost by using only a single policy checkpoint $C = 1$, noting that ensembling over $C > 1$ policy checkpoints is likely to improve the accuracy of our influence scores.

\subsection{Combining Score Functions}\label{appx:comb-score-fns}
For ease of exposition in \cref{sec:methods-quality}, we express the overall score of a demonstration as the convex combination of its performance influence and its quality score $\alpha \polinf + (1-\alpha)\Psi_{\mathrm{qual}}$, where $\alpha = 1$ and $\alpha \in [0, 1)$ instantiates \basemethod{} and \qualitymethod{}, respectively. Here, we additionally note that taking weighted combinations of score functions requires first normalizing them to equivalent scales. 
Hence, our implementation uniformly normalizes demonstration scores within the range $[0, 1]$ (i.e., producing an absolute ranking of demonstrations) for each score function $\polinf$ and $\Psi_{\mathrm{qual}}$ before combining them.
This simple approach can be applied to combine an arbitrary number of demonstration score functions.

\section{Experimental Setup}\label{appx:experiments}

\subsection{Hardware Setup}
As depicted in \cref{fig:franka-dp-results}, our hardware experiments involve a Franka FR3 manipulator robot. 
We use a single ZED 2 camera to capture RGB-D observations and disregard the depth information. 
Our image-based policies process $256\times256$ downsampled RGB observations and predict sequences of end-effector poses for the manipulator, which are tracked using operational space control~\cite{khatib2003unified}.

\subsection{Policy Architectures}

\textbf{Diffusion Policy (DP):} We use the original diffusion policy implementation\footnote{DP's open-source implementation: \texttt{\url{https://github.com/real-stanford/diffusion_policy}}.} from \citet{chi2023diffusion}. Specifically, we use the convolutional-based diffusion policy architecture for efficiency. For state-based tasks (e.g., in RoboMimic; \cref{fig:robomimic-dp-results}), actions are generated solely using the noise prediction network $\epsilon_\theta$ as described in \cref{appx:actinf-dp}. However, for image-based tasks (e.g., on hardware; \cref{fig:franka-dp-results}), the policy $\pi_\theta$ contains two sets of parameters $\theta = (\theta_o, \theta_a)$ corresponding to a ResNet-18 encoder $E_{\theta_o}$ and the noise prediction network $\epsilon_{\theta_a}$. When scoring demonstrations, we compute action influences (\cref{eq:actinf-approx-square}) over all available policy parameters $\theta$, noting that one might also consider using a subset of the parameters, e.g., those of the noise prediction network or an alternative action head, under reduced computational budgets. 

\textit{Other optimizations:} In preliminary experiments, we found that the original diffusion policy (a) was heavily over-parameterized and (b) converged in performance much earlier in training than the specified maximum number of epochs. Thus, to accelerate experimentation in RoboMimic (\cref{fig:robomimic-dp-results}), we (a) manually determined the smallest model size that performed similarly to the original policy and (b) adjusted the maximum number of epochs to the point where additional training would result in no further performance gains. Importantly, we keep the model size and training epochs consistent across all curation methods for a given RoboMimic task. For real-world hardware experiments, we use the same model size and limit the number of training steps to 200K across all tasks, similar to \citet{hejna2025robotdatacurationmutual}. All other  diffusion policy hyperparameters are consistent with the original implementation~\cite{chi2023diffusion}. 

\begin{wraptable}{r}{0.34\linewidth}
    \vspace{-1em}
    \small
    \caption{\small
        \textbf{Hyperparameter configuration} used for $\pi_0$~\cite{black2410pi0} post-training.
    }
    \label{tab:pi0-params}
    \adjustbox{max width=\linewidth}{
    \begin{tabular}{l|l}
        \toprule
        \textbf{Hyperparameter}        & \textbf{Value}                    \\
        \midrule
        Training steps                 & 30{,}000                          \\
        Batch size                     & 16                                \\
        Optimizer                      & AdamW                             \\
        Learning rate schedule         & Cosine decay                      \\
        EMA                            & Disabled                          \\
        Action chunk length            & 50 steps                          \\
        Control frequency              & 10 Hz                             \\
        Image resolution               & $224 \times 224$                  \\
        Observation history            & 1 frame                           \\
        \midrule
        VLM backbone LoRA              & Rank = 16, $\alpha = 16$          \\
        Action expert LoRA             & Rank = 32, $\alpha = 32$          \\
        \bottomrule
    \end{tabular}}
    \vspace{-1em}
\end{wraptable}

\textbf{Generalist Robot Policy ($\pi_0$):} 
We fine-tune \texttt{Physical Intelligence}'s $\pi_0$ Vision-Language-Action (VLA) policy\footnote{$\pi_0$'s open-source implementation: \texttt{\url{https://github.com/Physical-Intelligence/openpi}}.} via Low-Rank Adaptation (LoRA)~\cite{hu2022lora} on the ``Figure-8'' and ``TuckBox'' tasks. To ensure the post-trained policy's performance is solely a result of the properties of the curated dataset used for training, we use the standard fine-tuning parameter configuration from \citet{black2410pi0} and keep all hyperparameters fixed across experiments (see \cref{tab:pi0-params}). We trained on 2 NVIDIA RTX 4090 GPUs, which took approximately 15 hours under the configuration in \cref{tab:pi0-params}. In initial experiments, we found that training for 30K steps was necessary to compensate for mismatch between our robot's action space (target end-effector poses tracked via operational space control) and the action spaces used to pre-train the base $\pi_0$ policy (absolute joint angles). In addition, we found that using a descriptive prompt for the task was necessary to yield performant policies.
We kept these prompts fixed across training, evaluation, and all curation settings. For the ``TuckBox'' task, we used the instruction ``Move the blue box underneath the white shelf'' to avoid biasing the policy towards a particular behavior mode (e.g., ``sliding'' or ``pick-and-place''). For the ``Figure-8'' task, we used the instruction ``Pick up the red rope, then tie a figure 8,'' where we found the two-step instruction to increase performance over shorter instructions like ``Tie the cleat.'' Similar to the diffusion policy experiment, we fine-tune a separate $\pi_0$ model for each curation task---filter-$k$ (\cref{task:filter-k}) and select-$k$ (\cref{task:select-k})---using their corresponding base demonstration datasets.
We then fine-tune additional $\pi_0$ models on datasets curated by our methods.

\subsection{Tasks \& Datasets}\label{appx:tasks}
Here, we provide additional details regarding our real-world hardware tasks and their corresponding datasets. We refer to \citet{pmlr-v164-mandlekar22a} for details on the simulated RoboMimic benchmark. 

\textbf{Figure-8:}
A brief description of the task is provided in \cref{sec:exp-quality}. The ``Figure-8'' dataset contains 160 demonstrations evenly split across four \textit{quality tiers}. Higher quality demonstrations complete the task at a constant rate without errors, while lower-quality demonstrations vary in progression rate~\cite{agia2024unpacking} and include retry or recovery behaviors. Therefore, the ``Figure-8'' task intends to reflect a practical setting where demonstrations of varying properties are introduced during data collection, whether organically or deliberately, e.g., to improve policy robustness to recoverable failures~\cite{dai2024racer}. Therefore, we expect curation algorithms that distinguish demonstrations upon notions of quality (e.g., predictability~\cite{hejna2025robotdatacurationmutual}) to perform well on this task, which is consistent with our findings in \cref{fig:franka-dp-results}(a) and \cref{fig:franka-pi0-transfer-results}(a).

\textbf{TuckBox:}
A brief description of the task is provided in \cref{sec:exp-strategies}. As mentioned, the ``TuckBox'' dataset contains 120 demonstrations split 2:1 between two subsets: 80 demonstrations solve the task by sliding the box under the receptacle, while 40 demonstrations first reposition the box in front of the receptacle via pick-and-place. Although the sliding strategy appears more smooth and involves just a single step, it is rendered unreliable by imperceptible test-time distribution shifts to the box's mass distribution. As such, ``TuckBox'' stands conceptually opposite to ``Figure-8,'' whereby attending to heuristic properties of demonstrations (e.g., quality) may result in poor curation performance (as shown in \cref{fig:franka-dp-results}(b)).

\textbf{Bookshelf:}
A brief description of the task is provided in \cref{sec:exp-correlations}. To summarize, the robot must extract a target book that is either shelved alone---affording a simple, horizontal pulling motion---or with another book stacked on top of it (i.e., a \textit{bookstack}). In the bookstack case, the robot must extract the target book using a vertical pulling motion, such that the stacked book does not fall off the shelf in the process (see \cref{fig:franka-dp-results}(c)). In total, the ``Bookshelf'' dataset contains 120 demonstrations split across three subsets: (a) 60 demonstrations feature the target book shelved alone with a white background, (b) 20 demonstrations feature the bookstack with a white background, and (c) 40 demonstrations feature the bookstack with a dark background. All subsets feature task-irrelevant distractor books on other shelves.

\textit{Spurious correlations in training data:} Although the vertical pulling solution to the bookstack case is demonstrated in scenes with both white and dark backgrounds, the disproporionate number of demonstrations in subset (a) versus subset (b) spuriously correlates the horizontal pulling motion with the white background. Such spurious correlations may result in \textit{causal confusion}~\cite{de2019causal}, where the policy ignores the bookstack, attends the white background, and executes the failing horizontal strategy.

\textit{Spurious correlations in rollout data:} Like ``TuckBox,'' ``Bookshelf'' represents another limiting case for curating data with quality metrics~\cite{hejna2025robotdatacurationmutual}. However, it also presents an additional challenge for methods that seek to curate data using online experience~\cite{chen2025curating}. For example, approaches that attend to differences in states between successful and failed policy rollouts may be susceptible to spurious correlations in the rollout data. Consider the simple case: if we were to observe successful rollouts when the target book is shelved alone and failed rollouts when another book is stacked above the target, then training a classifier (i.e., as in Demo-SCORE~\cite{chen2025curating}) to distinguish successful from failed states may wrongly attribute failures to the presence of the stacked book. Curating demonstrations with such a classifier would, in turn, worsen the spurious correlation in the training data. Thus, we posit that handling more challenging cases of spurious correlations in real-world data will require methods that \textit{causally attribute} the outcomes of observed test-time experiences to the training data, such as \basemethod{}.

\subsection{Baseline Details}\label{appx:baselines}

\textbf{DemInf:} We use the official implementation\footnote{DemInf open-source implementation: \texttt{\url{https://github.com/jhejna/demonstration-information}}.} provided by \citet{hejna2025robotdatacurationmutual}. We note that DemInf curates data offline---that is, without using any policy rollouts---and is at present only applicable to the demonstration filtering setting (i.e., filter-$k$, as defined in \cref{task:filter-k}). 

\textbf{Demo-SCORE:} We construct our own implementation based on the description provided by the authors~\cite{chen2025curating}. Given our assumed fixed budget of $m = 100$ rollouts for RoboMimic experiments (\cref{sec:experiments}), we collect 25 rollouts from $C = 4$ policy checkpoints throughout training. We train three-layer MLP classifiers with hidden dimensions $[16, 16, 16]$ on the first three rollout sets, and select the best classifier via cross-validation on the last 25 rollouts, as described in \cite{chen2025curating}. Since we reduce the rollout budget to $m = 25$ rollouts for hardware experiments (\cref{sec:experiments}), we collect 25 rollouts from the last $C = 1$ policy checkpoint. We then train a single ResNet-18 encoder and three-layer classification head with hidden dimensions $[32, 32, 32]$ on 20 of the rollouts, leaving 5 validation rollouts to monitor for overfitting. We train all classifiers with a heavy dropout of $0.3$ and an AdamW weight decay of $0.1$ to prevent overfitting, in alignment with \cite{chen2025curating}.
Although \citet{chen2025curating} only test Demo-SCORE for demonstration filtering, we extend its use for demonstration selection (i.e., select-$k$, as defined in \cref{task:select-k}).

\textbf{Success Similarity:}
We design a custom robot data curation algorithm  that, similar to Demo-SCORE, valuates demonstrations based on a heuristic measure of similarity \textit{w.r.t.} successful policy rollouts. Instead of training classifiers, Success Similarity measures the average state-embedding similarity of a demonstration \textit{w.r.t.} all successful rollouts as 
\begin{equation*}
    S(\xi; \calD_\tau) = -\sum_{\tau \in \calD_\tau} \bigg[\mathbf{1}(R(\tau) = 1) \cdot \frac{1}{H^2} \sum_{s'\in\tau}\sum_{s\in\xi} D\big(\phi(s'), \phi(s)\big) \bigg],
\end{equation*}
where the indicator function $\mathbf{1}$ evaluates to 1 if rollout $\tau$ is successful and 0 otherwise, $H$ is the assumed length of all demonstrations $\xi \in \calD$ and rollouts $\tau \in \calD_\tau$ for notational simplicity, $\phi$ is the state embedding function, and $D$ is a specified distance function over state embeddings~\cite{sinha2024real}, such as the Mahalanobis, L2, or cosine distance. 
For image-based states, we experimented with various embedding functions $\phi$, including ResNet~\cite{he2016deep}, DINOv2~\cite{oquab2023dinov2}, and the policy's vision encoder~\cite{agia2024unpacking}, and ultimately found the policy's vision encoder to work best in RoboMimic. The embedding function is set to identity for low-dimensional states (i.e., $\phi(s) = s$). 
Lastly, the distance function $D$ is chosen for compatibility with $\phi$: e.g., L2 distance for policy encoder embeddings and cosine distance for DINOv2 embeddings. 

\textit{Comparison to Performance Influence (\basemethod{}):} One can interpret Success Similarity as replacing the action influence $\actinf((s',a'),(s,a))$ (\cref{eq:actinf}) with a state-based proxy $-D(\phi(s'), \phi(s))$ in an attempt to estimate the performance contribution of a demonstration (\cref{eq:policy-inf-deriv}). In our RoboMimic experiments (\cref{fig:robomimic-dp-results}), this approach performs comparably to Demo-SCORE and, in some cases, even outperforms it---without requiring the training of any additional models. However, Success Similarity performs consistently worse than \basemethod{} across all tasks, supporting prior findings that influence functions offer a substantially stronger causal signal than heuristic measures of similarity~\cite{park2023trak}.

\textbf{Oracle:} For each task, the Oracle method represents a best attempt to curate data assuming privileged access to ground-truth demonstration labels. For the RoboMimic and ``Figure-8'' tasks, the Oracle ranks demonstrations in descending order of quality, choosing high-quality demonstrations before low-quality demonstrations. For the ``TuckBox'' task, the Oracle first chooses all demonstrations exhibiting the more robust pick-and-place strategy before any demonstration exhibiting the more brittle sliding strategy. Lastly, for the ``Bookshelf'' task, the Oracle chooses demonstrations to minimize the effect of the \textit{known} spurious correlation (i.e., horizontal pulling motion in the presence of a white background), resulting in a more balanced curated dataset. These definitions of the Oracle apply identically to the filter-$k$ (\cref{task:filter-k}) and select-$k$ (\cref{task:select-k}) curation tasks studied throughout this work.

\textbf{Additional baselines:} We implement a number of additional custom baselines that one might try in practice, such as curating data based on policy loss, policy uncertainty, state diversity, and action diversity. However, we exclude them from our experiments given their relatively poor performance.

\section{Additional Results \& Analysis}\label{appx:results}

We present additional results and ablations for our RoboMimic and Franka real-world tasks that were cut from the main text due to space constraints. 

\subsection{Extended Discussion on RoboMimic Results (\cref{sec:discussion-properties})}\label{appx:results-robomimic-discussion}

We provide an extended discussion on \cref{sec:discussion-properties} for our RoboMimic simulation results. 

\textit{Performance versus Data Quality:} One of our key findings is that the performance of a state-of-the-art policy does not strictly correlate with the \textit{perceived quality} of its training data. Factors such as redundancy, balance, and coverage of the dataset all play a role in determining policy performance. This is illustrated in the Oracle filter-$k$ results (left three plots of \cref{fig:robomimic-dp-results}). While the top row shows a monotonic increase in average dataset quality as lower-quality demonstrations are filtered out, the bottom row reveals (1) a consistent performance drop for diffusion policies on 2 out of 3 tasks, and (2) as expected, performance degradation when too many demonstrations are removed. Similar analysis applies to the select-$k$ setting. These results highlight two important points: First, the impact of dataset curation should not be judged by quality labels alone, but by the downstream performance of models trained on curated datasets. Second, determining how much data to curate (i.e., the $k$ in filter-$k$ and select-$k$) remains another key challenge for effective data curation in practice.

\textit{Performance versus Task Complexity:} We evaluate curation performance across three RoboMimic tasks of increasing complexity---``Lift MH,'' ``Square MH,'' and ``Transport MH.'' On the simplest task, “Lift MH,” diffusion policies achieve 100\% success despite training on all demonstrations, indicating that low-quality demonstrations have minimal impact and can be safely filtered. We observe a similar trend for the moderately difficult ``Square MH'' task, where the policy benefits from access to all demonstrations regardless of their quality. However, performance degrades more quickly as demonstrations are filtered, suggesting increased sensitivity to data quantity due to the task’s higher complexity relative to ``Lift MH.'' Finally, on the challenging ``Transport MH'' task, which requires precise bi-manual coordination, both \basemethod{} and \qualitymethod{} significantly outperform the base policy. These results suggest that curation of mixed-quality datasets is most beneficial for complex, precision-critical tasks, where training on lower-quality data is more likely to hinder performance.

\subsection{Ablation on Number of Policy Rollouts in RoboMimic (\cref{sec:discussion-rollouts})}

We conduct an ablation study in RoboMimic evaluating the quality of datasets curated by \basemethod{} and \qualitymethod{} under varying numbers of rollouts, $m \in \{1, 5, 10, 25, 50, 100\}$. The results for state-based and image-based diffusion policies are shown in \cref{fig:robomimic-state-data-quality-rollout-appx} and \cref{fig:robomimic-image-data-quality-rollout-appx}, respectively. For ``Lift MH'' and ``Square MH,'' performance influences (\cref{eq:policy-inf-deriv}) stabilize around $m \in [25, 50]$, yielding quality trends similar to those obtained with $m = 100$. For ``Transport MH,'' quality trends continue to evolve until approximately $m \in [50, 100]$ rollouts, indicating that more rollouts are beneficial for accurate influence estimation in complex task settings---where curation has the greatest effect on performance.

\begin{figure}[H]
    \centering
    \includegraphics[width=\linewidth]{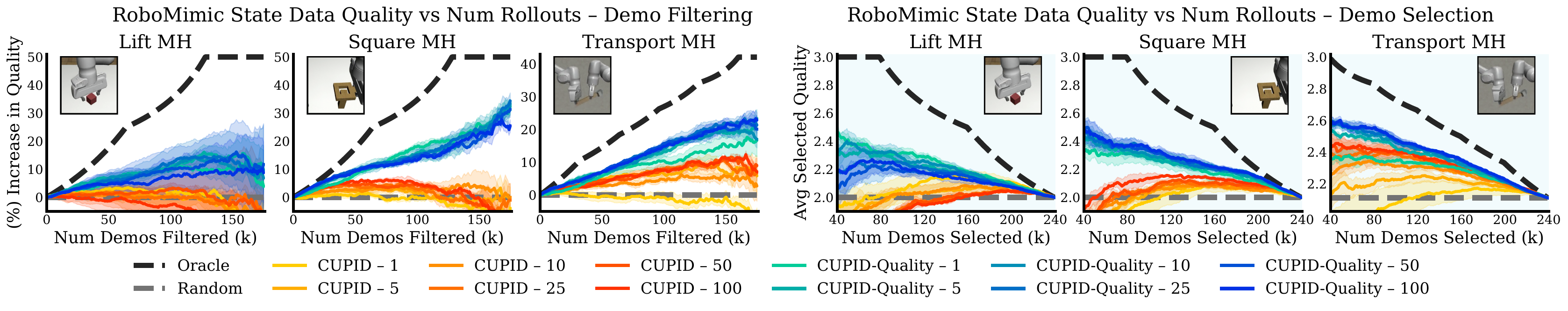}
    \vspace{-20pt}
    \caption{\small
        RoboMimic state ablation: Data quality trends under varying number of rollouts. Performance influences (\cref{eq:policy-inf-deriv}) converge around $m \in [25, 50]$ rollouts for ``Lift MH'' and ``Square MH'' (yielding similar quality trends), but continue to evolve until $m \in [50, 100]$ rollouts for ``Transport MH.'' 
        Curation performed on state-based diffusion policies.
        Results are averaged over 3 random seeds.
        Errors bars represent the standard error.
    }
    \label{fig:robomimic-state-data-quality-rollout-appx}
\end{figure}

\begin{figure}[H]
    \centering
    \includegraphics[width=\linewidth]{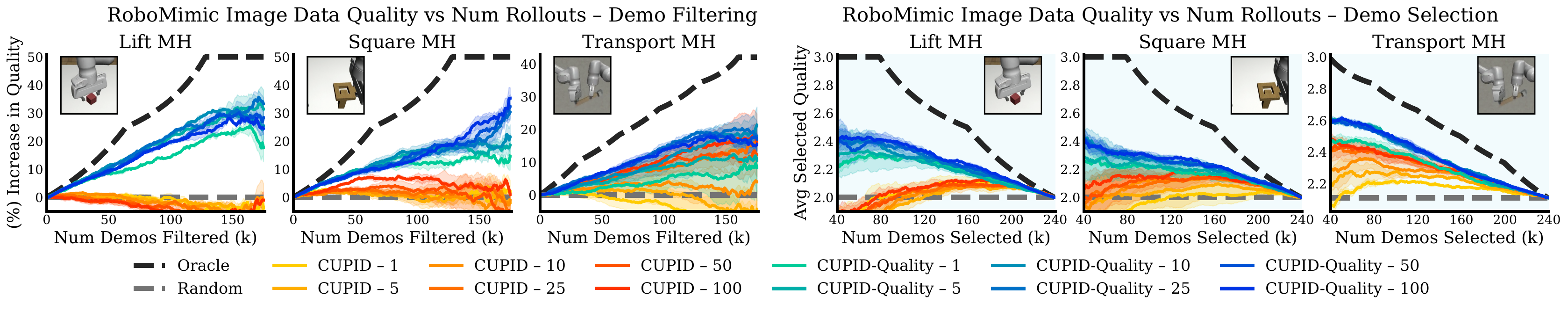}
    \vspace{-20pt}
    \caption{\small
        RoboMimic image ablation: Data quality trends under varying number of rollouts. Performance influences (\cref{eq:policy-inf-deriv}) converge around $m \in [25, 50]$ rollouts for ``Lift MH'' and ``Square MH'' (yielding similar quality trends), but continue to evolve until $m \in [50, 100]$ rollouts for ``Transport MH.''
        Curation performed on image-based diffusion policies.
        Results are averaged over 3 random seeds. 
        Errors bars represent the standard error.
    }
    \label{fig:robomimic-image-data-quality-rollout-appx}
\end{figure}

\subsection{Additional Data Quality Results in RoboMimic}

We provide full data quality results in RoboMimic. \cref{fig:robomimic-state-data-quality-appx} is identical to the top row of \cref{fig:robomimic-dp-results} in the main text, but also includes data quality trends for select-$k$ curation on ``Lift MH.'' \cref{fig:robomimic-image-data-quality-appx} shows data quality results for image-based diffusion policies. We do not retrain image-based policies on curated datasets (as in the bottom row of \cref{fig:robomimic-dp-results}) due to the substantial computational resources required.

\begin{figure}[H]
    \centering
    \includegraphics[width=\linewidth]{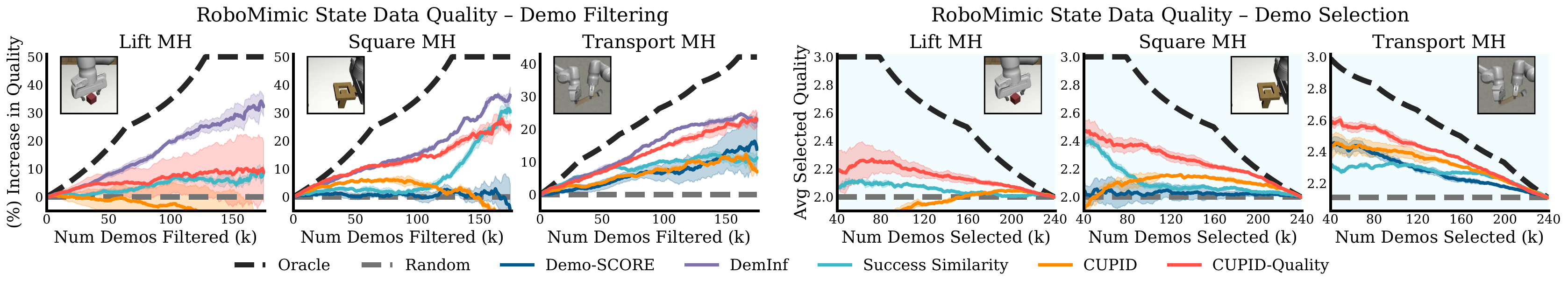}
    \vspace{-20pt}
    \caption{\small
        RoboMimic state data quality results. Curation performed on state-based diffusion policies. Results are averaged over 3 random seeds. Errors bars represent the standard error.
    }
    \label{fig:robomimic-state-data-quality-appx}
\end{figure}

\begin{figure}[H]
    \centering
    \includegraphics[width=\linewidth]{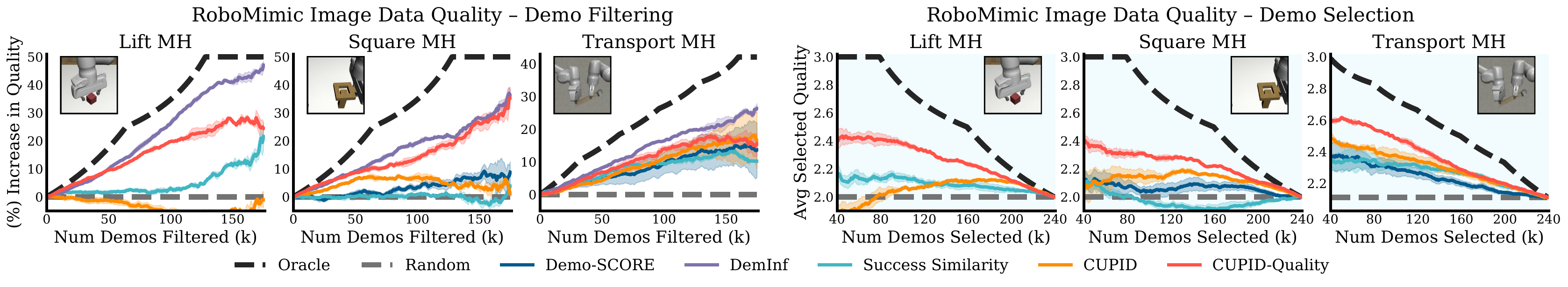}
    \vspace{-20pt}
    \caption{\small
        RoboMimic image data quality results. Curation performed on image-based diffusion policies. Results are averaged over 3 random seeds. Errors bars represent the standard error.
    }
    \label{fig:robomimic-image-data-quality-appx}
\end{figure}

\subsection{Data Filtering Curation Distributions in Franka Real-World}\label{appx:filter-distr}

\begin{figure}[H]
    \centering
    
    \begin{subfigure}[b]{\linewidth}
        \centering
        \includegraphics[width=0.95\linewidth]{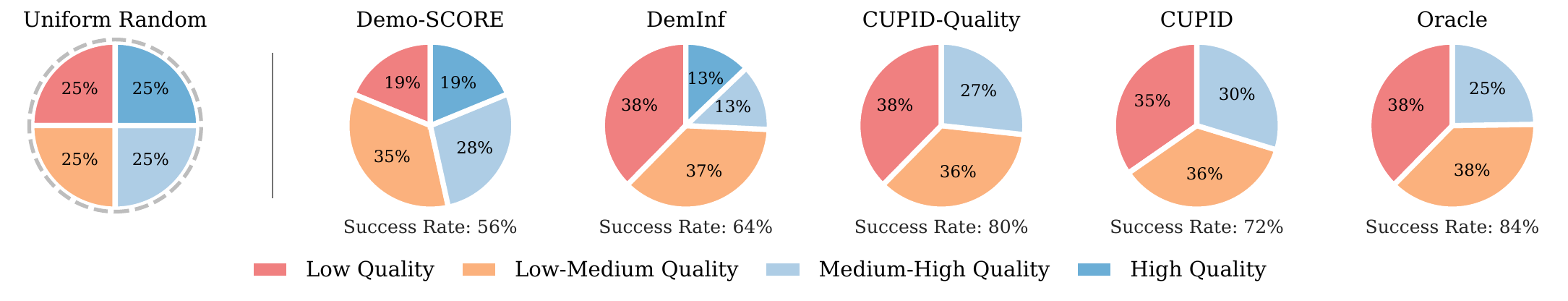}
        \vspace{-6pt}
        \caption{\footnotesize 
            \textbf{Figure-8:} Distribution of demonstrations \textit{filtered}. Filtering lower-quality demos is better.
        }
        \label{fig:franka-dp-distr-filter-curated-results-figure8}
    \end{subfigure}

    \vspace{18pt}

    \begin{subfigure}[b]{\linewidth}
        \centering
        \includegraphics[width=0.95\linewidth]{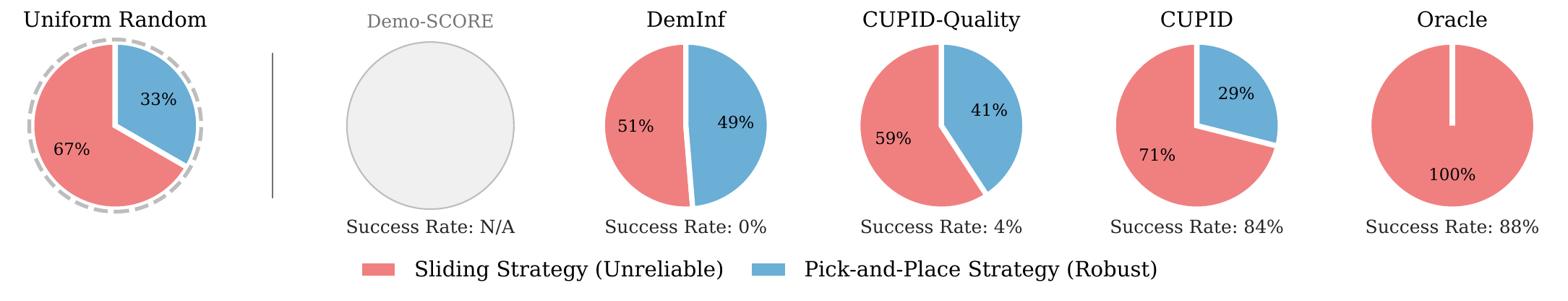}
        \vspace{-6pt}
        \caption{\footnotesize
            \textbf{TuckBox:} Distribution of demonstrations \textit{filtered}. Filtering sliding demos is better.
        }
        \label{fig:franka-dp-distr-filter-curated-results-tuckbox}
    \end{subfigure}
    
    \vspace{18pt}
    
    \begin{subfigure}[b]{\linewidth}
        \centering
        \includegraphics[width=0.95\linewidth]{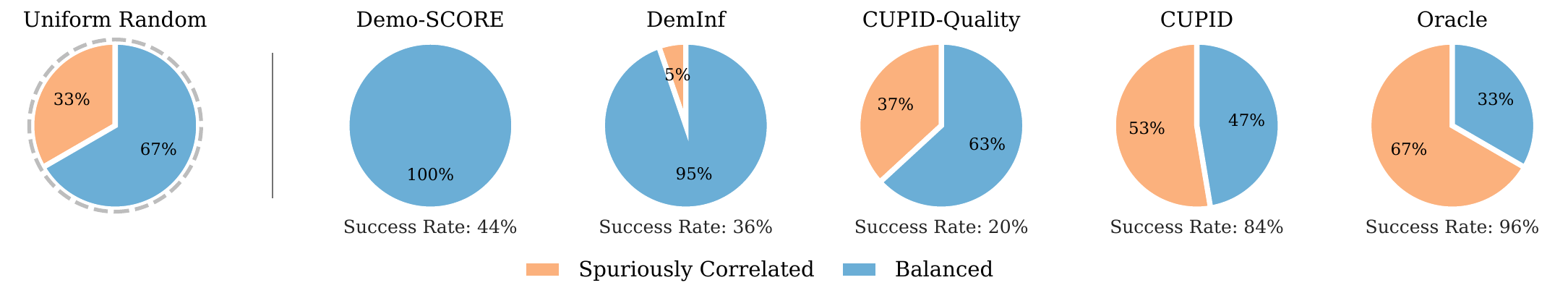}
        \vspace{-6pt}
        \caption{\footnotesize
            \textbf{Bookshelf:} Distribution of demonstrations \textit{filtered}. Filtering spurious correlations is better.
        }
        \label{fig:franka-dp-distr-filter-curated-results-bookshelf}
    \end{subfigure}
    
    \vspace{8pt}   
    
    \caption{\small
        \textbf{Franka diffusion policy -- distribution of demonstrations filtered ($S^\star$ in \cref{task:filter-k}).} See \cref{fig:franka-dp-distr-filter-dataset-results} for distributions of the corresponding curated datasets used for policy training.\\
    }
    \label{fig:franka-dp-distr-filter-curated-results}
\end{figure}

\subsection{Data Selection Curation Distributions in Franka Real-World}\label{appx:select-distr}

\begin{figure}[H]
    \centering
    
    \begin{subfigure}[b]{\linewidth}
        \centering
        \includegraphics[width=0.90\linewidth]{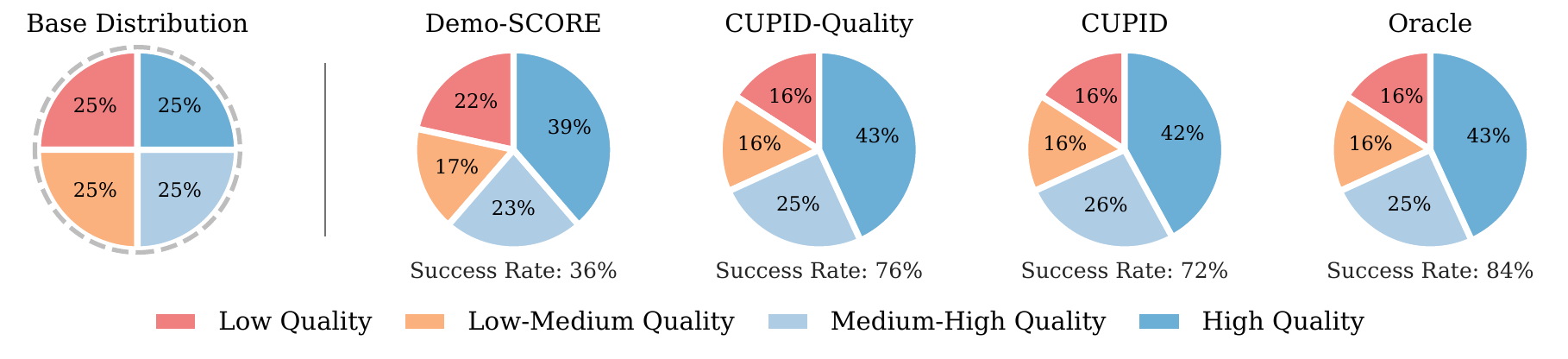}
        \vspace{-6pt}
        \caption{\footnotesize 
            \textbf{Figure-8:} Distribution of curated demonstrations after \textit{selecting} 33\%. Higher-quality demos are better.
        }
        \label{fig:franka-dp-distr-select-dataset-results-figure8}
    \end{subfigure}

    \vspace{18pt}

    \begin{subfigure}[b]{\linewidth}
        \centering
        \includegraphics[width=0.90\linewidth]{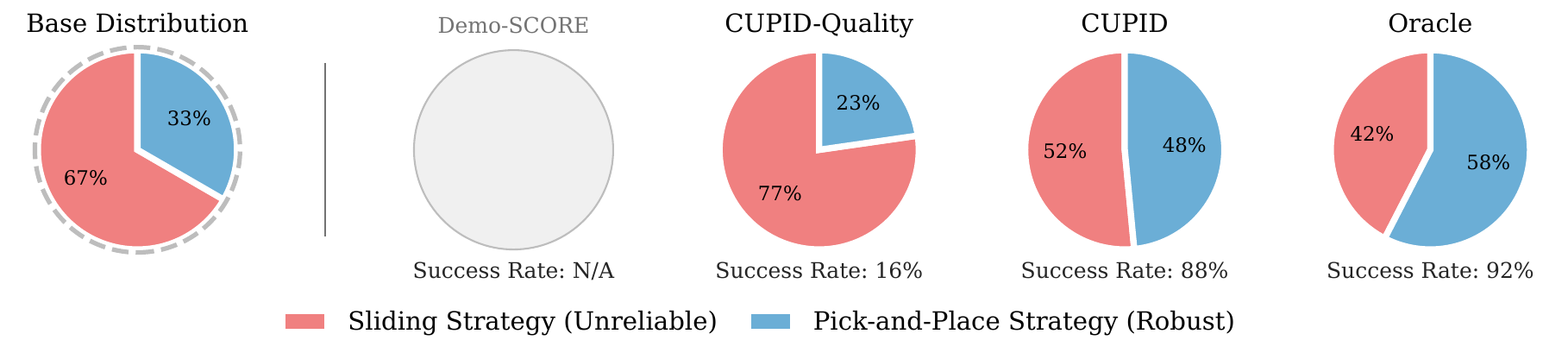}
        \vspace{-6pt}
        \caption{\footnotesize            
            \textbf{TuckBox:} Distribution of curated demonstrations after \textit{selecting} 33\%. Pick-and-place demos are better.
        }
        \label{fig:franka-dp-distr-select-dataset-results-tuckbox}
    \end{subfigure}
    
    \vspace{8pt}   
    
    \caption{\small
        \textbf{Franka diffusion policy curated dataset distributions for selection (\cref{task:select-k}).} \basemethod{} selects higher-quality demonstrations (Figure-8) and robust strategies (TuckBox), improving policy performance across tasks (see \cref{fig:franka-dp-results}). While curation heuristics employed by baselines may be effective in some cases (e.g., \qualitymethod{} in Figure-8), they can lead to suboptimal selection in others. \\
    }
    \label{fig:franka-dp-distr-select-dataset-results}
\end{figure}

\begin{figure}[H]
    \centering
    
    \begin{subfigure}[b]{\linewidth}
        \centering
        \includegraphics[width=0.90\linewidth]{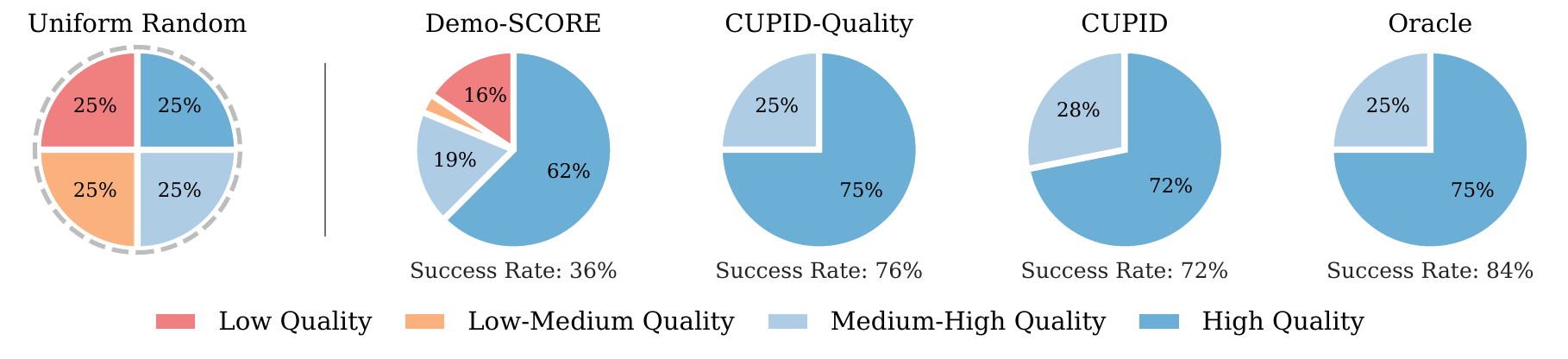}
        \vspace{-6pt}
        \caption{\footnotesize 
            \textbf{Figure-8:} Distribution of demonstrations \textit{selected}. Selecting higher-quality demos is better.
        }
        \label{fig:franka-dp-distr-select-curated-results-figure8}
    \end{subfigure}

    \vspace{18pt}

    \begin{subfigure}[b]{\linewidth}
        \centering
        \includegraphics[width=0.90\linewidth]{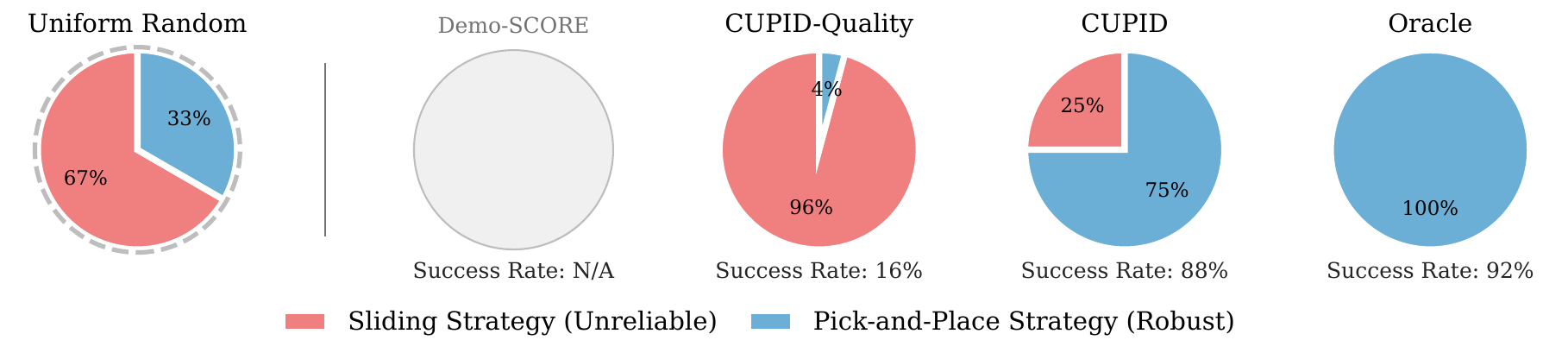}
        \vspace{-6pt}
        \caption{\footnotesize
            \textbf{TuckBox:} Distribution of demonstrations \textit{selected}. Selecting pick-and-place demos is better.
        }
        \label{fig:franka-dp-distr-select-curated-results-tuckbox}
    \end{subfigure}
    
    \vspace{8pt}   
    
    \caption{\small
        \textbf{Franka diffusion policy -- distribution of demonstrations selected ($S^\star$ in \cref{task:select-k}).} See \cref{fig:franka-dp-distr-select-dataset-results} for distributions of the corresponding curated datasets used for policy training.\\
    }
    \label{fig:franka-dp-distr-select-curated-results}
\end{figure}

\subsection{Additional Results for Franka $\pi_0$: Curated Dataset Transfer (\cref{sec:discussion-pi0-transfer})}\label{appx:results-pio-transfer}

\cref{fig:franka-pi0-results-appx} contains the full results of our $\pi_0$ ablation (\cref{fig:franka-pi0-transfer-results}), including the performance of $\pi_0$~\cite{black2410pi0} trained on datasets curated by \basemethod{} and \qualitymethod{} for both the ``Figure-8'' and ``TuckBox'' tasks. 

\begin{figure}[H]
    \centering
    \includegraphics[width=0.60\linewidth]{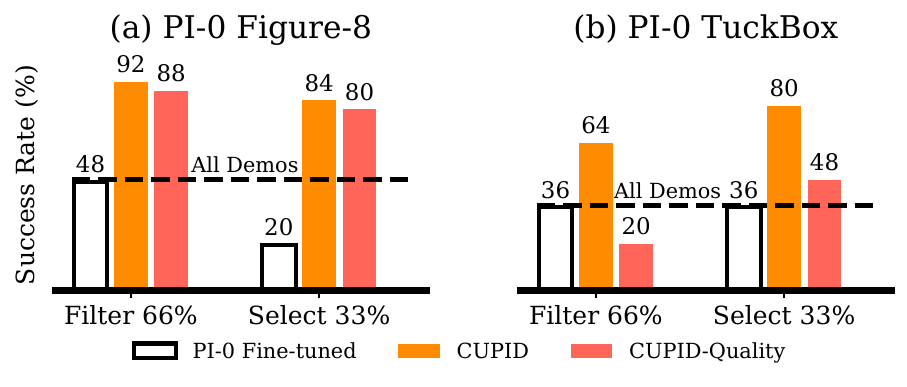}
    \vspace{-10pt}
    \caption{\small 
        Data curated for single-task diffusion policies improves $\pi_0$~\cite{black2410pi0} post-training performance. As in \cref{fig:franka-dp-results}, quality measures (\qualitymethod{}) may degrade performance when higher-quality demonstrations induce brittle strategies at test time (TuckBox), whereas curating based on performance (\basemethod{}) is robust across settings.
    }
    \label{fig:franka-pi0-results-appx}
\end{figure}

In this experiment, we investigate two questions: (1) Can datasets curated with one policy architecture result in increased performance when used to train another policy with a different architecture? (2) How influential is curation for policies that have been pre-trained on large-scale multi-task datasets?

\textit{Curation Transfer:} Towards the first question, \cref{fig:franka-pi0-results-appx} shows that datasets curated using diffusion policies significantly increase the performance of fine-tuned $\pi_0$ policies relative to fine-tuning on the base, uncurated datasets. We attribute these results to two causes: First, we find that both the diffusion policy and $\pi_0$ have sufficient capacity to accurately fit the training data distribution, and thus, they should learn a similar behavior distribution from the training data. This implies that the observed performance gains in \cref{fig:franka-pi0-results-appx} result from curation transfer between policies. Second, as the ``TuckBox'' experiment shows in \cref{fig:franka-dp-results}(b), our method is able to effectively identify behaviors in the demonstration data that are not robust. While on-policy evaluations (i.e., rollouts) are necessary to identify such brittle behaviors, these are purely properties of the training demonstration data. Therefore, filtering out poor behaviors will increase the performance of any policy.
Similarly, on the high-precision ``Figure-8'' task, filtering out more noisy, low-quality demonstrations is likely to improve performance for any policy. 

\textit{VLA Robustness:} Towards the second question, we find that even when the base policy is pre-trained on a large, diverse, multi-task dataset, curation is still essential to yield strong fine-tuned performance. As shown in \cref{fig:franka-pi0-results-appx}, $\pi_0$ policies trained on the base demonstration datasets are unable to reliably complete our tasks. In contrast, policies trained on curated datasets attain significantly higher success rates. As such, our results indicate that simply training VLM-based policies on more data and more tasks does not strictly result in pre-conditioned policies that use their generalist knowledge to ``ignore'' low-quality behaviors or brittle strategies in demonstration data---i.e., data curation still appears essential. 

\textit{Concluding Remarks:} Overall, these results indicate that using smaller, single-task policies to curate individual datasets, which may then benefit a larger, multi-task policy is a promising direction to alleviate the computational cost of applying our method to generalist policies. Still, we emphasize that datasets curated using our method are not completely \emph{model agnostic}, as the same demonstrations may influence different models in different ways. As such, while $\pi_0$ achieves a higher base performance than the diffusion policy, the $\pi_0$ policies trained on curated datasets perform similarly to or slightly worse than the diffusion policies (for which those datasets were curated).

\section{Derivations}\label{appx:derivations}

\subsection{Proof of \cref{prop:polinf}}\label{appx:proof}
\begin{proof}
    As presented in \cref{sec:background},  applying the basic derivation of the influence function\footnoteref{fn:track} in \cite{koh2017understanding} gives us that
    \begin{align*}
     \polinf(\xi) &:= \frac{dJ(\pi_\theta)}{d\epsilon}\bigg|_{\epsilon=0} \\
                  &= -\nabla_\theta J(\pi_\theta)^\top \nabla^2_\theta\calL_{\mathrm{bc}}(\theta ; \calD)^{-1} \nabla_\theta \ell_{\mathrm{traj}}(\xi; \pi_\theta).
     \end{align*}
     Next, note that the standard log-derivative trick underlying policy gradient methods \cite{sutton1999policy, williams1992simple} tells us that 
     \begin{align*}
         \nabla_\theta J(\pi_\theta) = \E_{\tau \sim p(\tau | \pi_\theta)} \big[R(\tau)\sum_{(s',a')\in \tau}\nabla_\theta\log \pi_\theta(a'|s')\big].
     \end{align*}
     Therefore, since $\calL_{\mathrm{bc}}$ and $\ell_{\mathrm{traj}}$ are deterministic functions of $\theta$, $\xi$, and $\calD$, it holds that 
     \begin{align*}
         \polinf(\xi) = \E_{\tau\sim p(\tau | \pi_\theta)} \big[ R(\tau) \sum_{(s',a')\in\tau} -\nabla_\theta\log \pi_\theta(a'|s')^\top H_{\mathrm{bc}}^{-1}\nabla_\theta\ell_{\mathrm{traj}}(\xi; \pi_\theta)\big]
     \end{align*}
     by linearity of expectation.
     Finally, by simply noting that $\ell_{\mathrm{traj}}(\xi; \pi_\theta) = \frac{1}{H}\sum_{(s,a)\in\xi}\ell(s, a;\theta)$ and applying the definition of $\actinf$, we have the result:
     \begin{align*}
     \polinf(\xi) = \E_{\tau\sim p(\tau|\pi_\theta)}\bigg[\frac{R(\tau)}{H} \sum_{(s',a')\in\tau}\sum_{(s,a)\in\xi}\actinf\big((s',a'),(s,a)\big)\bigg].
     \end{align*}
\end{proof}

\subsection{Derivation of Performance Influence for Variable Length Trajectories}\label{appx:proof-length}
In \cref{sec:formulation} and \cref{sec:method}, we assumed that all trajectories in the demonstration dataset $\calD$ were of an equal length $H$ for notational simplicity. Here, we show that without loss of generality, our analysis extends to the case where the length of demonstration trajectories vary. Suppose each demonstration $\xi^i \in \calD$ has length $H^i$, so that the base policy $\pi_\theta$ minimizes the average loss across all samples in the demonstration data, i.e., 
\begin{equation}\label{eq:bclossvary}
    \theta = \arg\min_{\theta'} \{\tilde{\mathcal{L}}_{\text{bc}}(\theta'; \mathcal{D}) := \frac{1}{(\sum_{i=1}^nH^i
    )
    }\sum_{\xi^i\in\calD}\sum_{(s, a) \in \xi^i} \ell(s, a; \pi_{\theta'})\}.
\end{equation}
Note that the objective in \cref{eq:bclossvary} is equivalent to an unweighted BC loss 
\begin{equation*}
    {\mathcal{L}'}_{\text{bc}}(\theta'; \mathcal{D}) := \sum_{\xi^i\in\calD}\sum_{(s, a) \in \xi^i} \ell(s, a; \pi_{\theta'}),
\end{equation*}
which decomposes into its unweighted trajectory losses $\ell_{\mathrm{traj}}'(\xi ; \pi_{\theta'}) := \sum_{(s,a) \in \xi}\ell(s, a; \pi_{\theta'})$, so that $\calL_{\mathrm{bc}}'(\theta', \calD) = \sum_{\xi^i \in \calD}\ell_{\mathrm{traj}}'(\xi^i ; \pi_{\theta'}).$ We can then derive an equivalent statement to \cref{prop:polinf} for the unweighted loss functions that applies when the demonstrations have variable length.

\begin{proposition}\label{prop:polinf-length}
Assume that $\theta(\calD) = \arg\min_{\theta'} \calL_{\mathrm{bc}}'(\theta'; \calD)$, that $\calL_{\mathrm{bc}}'$ is twice differentiable in $\theta$, and that $H_{\mathrm{bc}} \succ 0$ is positive definite (i.e., $\theta(\calD)$ is not a saddle point)\footnoteref{fn:track}. Then, it holds that 
\begin{equation}\label{eq:policy-inf-deriv-vary}
    \polinf(\xi) = \E_{\tau\sim p(\tau|\pi_\theta)}\bigg[R(\tau)\sum_{(s',a')\in\tau}\sum_{(s,a)\in\xi}\actinf\big((s',a'),(s,a)\big)\bigg].
\end{equation}
\end{proposition}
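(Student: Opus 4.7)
The statement mirrors \cref{prop:polinf} but substitutes the unweighted trajectory loss $\ell_{\mathrm{traj}}'(\xi;\pi_{\theta'}) := \sum_{(s,a)\in\xi}\ell(s,a;\pi_{\theta'})$ for the length-normalized version $\ell_{\mathrm{traj}}$. My plan is therefore to repeat the argument of \cref{prop:polinf} essentially verbatim, tracking precisely where the $1/H$ factor disappears, so that it becomes transparent that fixed-length and variable-length demonstrations differ only through the internal weighting of their trajectory losses and that no additional machinery is needed.

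First, I would invoke the same influence-function machinery as in \cref{sec:background}: perturb the training objective to $\calL_{\mathrm{bc}}'^{\epsilon,\xi}(\theta';\calD) := \calL_{\mathrm{bc}}'(\theta';\calD) + \epsilon \ell_{\mathrm{traj}}'(\xi;\pi_{\theta'})$, and apply the first-order Taylor expansion around the minimizer $\theta(\calD)$ of $\calL_{\mathrm{bc}}'$ to obtain $\polinf(\xi) = -\nabla_\theta J(\pi_\theta)^\top H_{\mathrm{bc}}^{-1} \nabla_\theta \ell_{\mathrm{traj}}'(\xi;\pi_\theta)$, where $H_{\mathrm{bc}}$ now denotes $\nabla^2_\theta \calL_{\mathrm{bc}}'(\theta;\calD)$. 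The positive-definiteness assumption on $H_{\mathrm{bc}}$ ensures this expression is well-defined. Next, I would apply the standard policy-gradient log-derivative identity to substitute $\nabla_\theta J(\pi_\theta) = \E_{\tau\sim p(\tau|\pi_\theta)}[R(\tau)\sum_{(s',a')\in\tau}\nabla_\theta\log\pi_\theta(a'|s')]$ and exploit the additive decomposition $\nabla_\theta \ell_{\mathrm{traj}}'(\xi;\pi_\theta) = \sum_{(s,a)\in\xi}\nabla_\theta \ell(s,a;\pi_\theta)$. Linearity of the bilinear map $(u,v)\mapsto u^\top H_{\mathrm{bc}}^{-1} v$ in each slot, together with linearity of expectation, then lets me commute the two summations with the inverse-Hessian sandwich and pull the result inside the expectation, at which point each summand $-\nabla_\theta\log\pi_\theta(a'|s')^\top H_{\mathrm{bc}}^{-1}\nabla_\theta\ell(s,a;\pi_\theta)$ is exactly $\actinf((s',a'),(s,a))$ per \cref{def:actinf}.

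There is no real obstacle beyond bookkeeping: the $1/H$ prefactor in \cref{prop:polinf} arose solely from the $1/H$ normalization inside $\ell_{\mathrm{traj}}$, which is absent in $\ell_{\mathrm{traj}}'$, so it simply does not appear in \cref{eq:policy-inf-deriv-vary}. The only conceptual subtlety worth flagging in the write-up is that $H_{\mathrm{bc}}$ inside \cref{def:actinf} must be re-interpreted as the Hessian of $\calL_{\mathrm{bc}}'$ rather than of $\calL_{\mathrm{bc}}$; since $\calL_{\mathrm{bc}}'$ and $\tilde{\calL}_{\mathrm{bc}}$ differ only by the positive constant $\sum_i H^i$, they share the same minimizer, and their Hessians differ by the same constant, so the action-influence quantities remain meaningful and are still amenable to the TRAK/Gauss-Newton estimation pipeline described in \cref{appx:method}.
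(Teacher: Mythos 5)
Your proposal is correct and follows essentially the same route as the paper's proof: the influence-function first-order expansion around the minimizer of $\calL_{\mathrm{bc}}'$, the \texttt{REINFORCE} log-derivative identity for $\nabla_\theta J(\pi_\theta)$, linearity of expectation and of the inverse-Hessian bilinear form, and finally the additive decomposition of $\ell_{\mathrm{traj}}'$ into per-sample losses to recover $\actinf$. Your added remark that $\calL_{\mathrm{bc}}'$ and $\tilde{\calL}_{\mathrm{bc}}$ share a minimizer and have Hessians agreeing up to the positive factor $\sum_i H^i$ is a correct (and welcome) piece of bookkeeping that the paper handles by simply declaring the two objectives equivalent before stating the proposition.
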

\begin{proof}
    As presented in \cref{sec:background},  applying the basic derivation of the influence function\footnoteref{fn:track} in \cite{koh2017understanding} gives us that
    \begin{align*}
     \polinf(\xi) &:= \frac{dJ(\pi_\theta)}{d\epsilon}\bigg|_{\epsilon=0} \\
                  &= -\nabla_\theta J(\pi_\theta)^\top \nabla^2_\theta\calL_{\mathrm{bc}}'(\theta ; \calD)^{-1} \nabla_\theta \ell_{\mathrm{traj}}'(\xi; \pi_\theta).
     \end{align*}
     Next, note that the standard log-derivative trick underlying policy gradient methods \cite{sutton1999policy, williams1992simple} tells us that 
     \begin{align*}
         \nabla_\theta J(\pi_\theta) = \E_{\tau \sim p(\tau | \pi_\theta)} \big[R(\tau)\sum_{(s',a')\in \tau}\nabla_\theta\log \pi_\theta(a'|s')\big].
     \end{align*}
     Therefore, since $\calL_{\mathrm{bc}}'$ and $\ell_{\mathrm{traj}}'$ are deterministic functions of $\theta$, $\xi$, and $\calD$, it holds that 
     \begin{align*}
         \polinf(\xi) = \E_{\tau\sim p(\tau | \pi_\theta)} \big[ R(\tau) \sum_{(s',a')\in\tau} -\nabla_\theta\log \pi_\theta(a'|s')^\top H_{\mathrm{bc}}^{-1}\nabla_\theta\ell_{\mathrm{traj}}'(\xi; \pi_\theta)\big]
     \end{align*}
     by linearity of expectation.
     Finally, by simply noting that $\ell_{\mathrm{traj}}'(\xi; \pi_\theta) = \sum_{(s,a)\in\xi}\ell(s, a;\theta)$ and applying the definition of $\actinf$, we have the result:
     \begin{align*}
     \polinf(\xi) = \E_{\tau\sim p(\tau|\pi_\theta)}\bigg[R(\tau)\sum_{(s',a')\in\tau}\sum_{(s,a)\in\xi}\actinf\big((s',a'),(s,a)\big)\bigg].
     \end{align*}
\end{proof}

\end{appendices}

\end{document}